\documentclass{article}


\usepackage[preprint]{neurips_2025}



\usepackage[utf8]{inputenc} 
\usepackage[T1]{fontenc}    
\usepackage{hyperref}       
\usepackage{url}            
\usepackage{booktabs}       
\usepackage{amsfonts}       
\usepackage{nicefrac}       
\usepackage{microtype}      
\usepackage{xcolor}         
\usepackage{amssymb}
\usepackage{amsmath}
\usepackage{cleveref}
\usepackage{amsthm}
\usepackage{mathrsfs}
\usepackage{bbm}
\usepackage{upgreek}
\usepackage[ruled,vlined]{algorithm2e} 
\usepackage{algcompatible}
\usepackage{todonotes}
\usepackage{autonum}
\usepackage{enumitem}
\usepackage{comment}
\usepackage{xspace}
\usepackage{xargs}
\hypersetup{
  colorlinks = true,
  citecolor = blue,
}
\usepackage{xcolor}         


\def\msa{\mathsf{A}}

\def\mse{\mathsf{E}}


\def\mcbb{\mathcal{B}}  

\def\mcp{\mathcal{P}}

\def\mcp{\mathcal{P}}


\def\rset{\mathbb{R}}

\def\nset{\mathbb{N}}


\def\mrl{\mathrm{L}}
\def\rml{\mathrm{L}}
\def\rmd{\mathrm{d}}

\def\rme{\mathrm{e}}

\def\rmC{\mathrm{C}}



\newcommandx{\functionspace}[2][1=+]{\mathbb{F}_{#1}(#2)}

\newcommand{\argmin}{\operatorname*{arg\,min}}

\newcommandx{\VarDeux}[3][3=]{\operatorname{Var}^{#3}_{#1}\left\{#2 \right\}}

\newcommand{\1}{\mathbbm{1}}

\newcommand{\LeftEqNo}{\let\veqno\@@leqno}




\newcommand{\N}{\ensuremath{\mathbb{N}}}

\newcommand{\PE}{\mathbb{E}}
\newcommand{\PP}{\mathbb{P}}


\newcommandx{\Vnorm}[2][1=V]{\| #2 \|_{#1}}
\newcommandx{\VnormEq}[2][1=V]{\left\| #2 \right\|_{#1}}
\newcommandx{\norm}[2][1=]{\ifthenelse{\equal{#1}{}}{\left\Vert #2 \right\Vert}{\left\Vert #2 \right\Vert^{#1}}}
\newcommandx{\normLigne}[2][1=]{\ifthenelse{\equal{#1}{}}{\Vert #2 \Vert}{\Vert #2\Vert^{#1}}}

\newcommand{\ps}[2]{\left\langle#1,#2 \right\rangle}



\newcommandx\probaMarkovTilde[2][2=]
{\ifthenelse{\equal{#2}{}}{{\widetilde{\mathbb{P}}_{#1}}}{\widetilde{\mathbb{P}}_{#1}\left[ #2\right]}}






\newcommand{\plusinfty}{+\infty}



\def\ie{\textit{i.e.}}

\def\eqsp{\;}

\newcommand{\ocint}[1]{\left(#1\right]}
\newcommand{\ooint}[1]{\left(#1\right)}
\newcommand{\ccint}[1]{\left[#1\right]}

\newcommandx{\weight}[2][2=n]{\omega_{#1,#2}^N}

\newcommandx\sequence[3][2=,3=]
{\ifthenelse{\equal{#3}{}}{\ensuremath{\{ #1_{#2}\}}}{\ensuremath{\{ #1_{#2}, \eqsp #2 \in #3 \}}}}
\newcommandx\sequenceD[3][2=,3=]
{\ifthenelse{\equal{#3}{}}{\ensuremath{\{ #1_{#2}\}}}{\ensuremath{( #1)_{ #2 \in #3} }}}

\newcommandx{\sequencen}[2][2=n\in\N]{\ensuremath{\{ #1_n, \eqsp #2 \}}}
\newcommandx\sequenceDouble[4][3=,4=]
{\ifthenelse{\equal{#3}{}}{\ensuremath{\{ (#1_{#3},#2_{#3}) \}}}{\ensuremath{\{  (#1_{#3},#2_{#3}), \eqsp #3 \in #4 \}}}}
\newcommandx{\sequencenDouble}[3][3=n\in\N]{\ensuremath{\{ (#1_{n},#2_{n}), \eqsp #3 \}}}

\def\eg{e.g.}

\newcommand{\opnorm}[1]{{\left\vert\kern-0.25ex\left\vert\kern-0.25ex\left\vert #1
    \right\vert\kern-0.25ex\right\vert\kern-0.25ex\right\vert}}

\def\Id{\operatorname{Id}}

\newcommandx{\CPE}[3][1=]{{\mathbb E}_{#1}\left[\left. #2 \, \middle \vert \, #3 \right. \right]} 

\newcommandx{\CPELigne}[3][1=]{{\mathbb E}_{#1}[\left. #2 \,  \vert \, #3 \right. ]} 

\newcommandx{\CPVar}[3][1=]{\mathrm{Var}^{#3}_{#1}\left\{ #2 \right\}}
\newcommand{\CPP}[3][]
{\ifthenelse{\equal{#1}{}}{{\mathbb P}\left(\left. #2 \, \right| #3 \right)}{{\mathbb P}_{#1}\left(\left. #2 \, \right | #3 \right)}}

\newcommandx{\osc}[2][1=]{\mathrm{osc}_{#1}(#2)}

\def\Id{\operatorname{Id}}






\newcommand\coupling[2]{\Gamma(\mu,\nu)}

\renewcommand{\geq}{\geqslant}
\renewcommand{\leq}{\leqslant}

\def\Leb{\mathrm{Leb}}

\def\bfm{\mathbf{m}}

\newcommand{\wasserstein}{\mathscr{W}}

\def\bfX{\mathbf{X}}

\def\bfo{\mathbf{o}}

\newcommandx{\Voi}[1][1=i]{\mathfrak{V}_{\bfo,#1}}
\newcommandx{\Vlyapc}[2][1=\bfo,2=i]{\mathfrak{V}_{#1,#2}}

\def\Wlyap{\mathfrak{W}}
\def\bfomega{\boldsymbol{\omega}}

\newcommandx{\Woi}[1][1=i]{\Wlyap_{\bfomega,#1}}
\newcommandx{\Wlyapc}[2][1=\bfomega,2=i]{\Wlyap_{#1,#2}}


%


 \newcommand{\KL}{\mathrm{KL}}
 
 \newcommand{\fisher}{\mathscr{I}}

\def\rmRU{\mathrm{R}^U}
\newcommand{\tcr}[1]{\textcolor{red}{#1}}
\def\frakp{\mathfrak{p}}
\newcommand{\pione}{\mathrm{\Pi}}
\newcommand{\pihat}{\hat{\pi}}

\def\eqlaw{\stackrel{\text{dist}}{=}}
\Crefname{algocf}{Algorithm}{Algorithms}

\newtheorem{assumption}{\textbf{H}\hspace{-3pt}}
\Crefname{assumption}{\textbf{H}\hspace{-3pt}}{\textbf{H}\hspace{-3pt}}
\crefname{assumption}{\textbf{H}}{\textbf{H}}

\newtheorem{assumptionT}{\textbf{T}\hspace{-3pt}}
\Crefname{assumptionT}{\textbf{T}\hspace{-3pt}}{\textbf{T}\hspace{-3pt}}
\crefname{assumptionT}{\textbf{T}}{\textbf{T}}

\newtheorem{definition}{Definition}
\crefname{definition}{definition}{definitions}
\Crefname{Definition}{Definition}{Definitions}

\newtheorem{theorem}{Theorem}
\crefname{theorem}{theorem}{theorems}
\Crefname{Theorem}{Theorem}{Theorems}

\newtheorem{proposition}{Proposition}
\crefname{proposition}{proposition}{propositions}
\Crefname{Proposition}{Proposition}{Propositions}

\newtheorem{remark}{Remark}
\crefname{remark}{remark}{remarks}
\Crefname{Remark}{Remark}{Remarks}

\title{Exponential Convergence Guarantees for Iterative Markovian Fitting}

%

\author{%
  Marta Gentiloni Silveri\\
  École polytechnique\\
  Route de Saclay, 91120 Palaiseau, France\\
  \texttt{marta.gentiloni-silveri@polytechnique.edu} \\
  \AND
  Giovanni Conforti \\
  Università degli Studi di Padova \\
  Via Trieste, 63, 35131 Padova, Italia \\
  \texttt{giovanni.conforti@math.unipd.it} \\
  \And
  Alain Durmus \\
  École polytechnique \\
  Route de Saclay, 91120 Palaiseau, France \\
  \texttt{alain.durmus@polytechnique.edu} \\
}

\begin{document}

\maketitle

\begin{abstract}
The Schrödinger Bridge (SB) problem has become a fundamental tool in computational optimal transport and generative modeling.
To address this problem, ideal methods such as Iterative Proportional Fitting and Iterative Markovian Fitting (IMF) have been proposed—alongside practical approximations like Diffusion Schrödinger Bridge and its Matching (DSBM) variant.
  While previous work have established asymptotic convergence guarantees for IMF, a quantitative, non-asymptotic understanding remains unknown. In this paper, we provide the first non-asymptotic exponential convergence guarantees for IMF under mild structural assumptions on the reference measure and marginal distributions, assuming a sufficiently large time horizon. Our results encompass two key regimes: one where the marginals are log-concave, and another where they are weakly log-concave. The analysis relies on new contraction results for the Markovian projection operator and paves the way to theoretical guarantees for DSBM.
\end{abstract}

\section{Introduction}

Generative models play a foundational role in modern machine learning, with widespread applications ranging from image synthesis~\cite{ramesh2021zero} and natural language generation~\cite{brown2020language} to molecular structure design~\cite{xu2022geodiff}. These models aim to learn the underlying probability distribution of a given dataset and can generate new, high-fidelity samples that resemble the original data. Among the various generative frameworks, diffusion and flow-based models have emerged as particularly powerful, thanks to both their empirical success and theoretical grounding. These models typically rely on stochastic differential equations (SDEs) to gradually transform a simple prior (often Gaussian noise) into samples from a complex target distribution, often through a learned score function~\cite{song2019generative, ho2020denoising, song2021scorebased}. However, despite their effectiveness, traditional score-based generative models (SGMs) often suffer from long sampling times, due to the need for finely discretized time grids and numerically stable SDE integration over long time horizons.

\vspace{0.5em}
An increasingly popular and theoretically grounded alternative is to cast generative modeling as a Schrödinger Bridge (SB) problem. Originally introduced by Erwin Schrödinger in the 1930s~\cite{schrodinger1932theorie}, the SB problem arises from statistical physics and seeks the most likely evolution of a cloud of independent particles (typically Brownian) that interpolates between two observed empirical distributions $\mu, \nu \in \rset^d$ at prescribed initial and final times. Formally, the SB problem consists in finding a path measure that minimizes the Kullback-Leibler (KL) divergence with respect to a reference diffusion process (usually a stationary Brownian motion), subject to fixed marginal constraints. Although originally motivated by principles from statistical mechanics, the SB problem is now known to be equivalent to a regularized version of classical optimal transport (OT), where an entropic term is added to the transport cost~\cite{nutz3373introduction}. In this formulation, the KL divergence acts as a soft transport penalty, leading to the interpretation of the SB as entropy-regularized OT in path space.

In addition to the path space (dynamical) formulation, it admits a static reformulation, where the optimization is performed over couplings between the given marginals. The optimal coupling $\pi^\star$, often referred to as the Schrödinger bridge, can be used to sample joint initial and terminal states. Conditional trajectories are then obtained by interpolating these endpoints with standard Brownian bridges. When the initial distribution is taken as the standard Gaussian $\gamma^d$, solving the SB problem identifies the most likely diffusion process — in terms of KL divergence from the reference — whose marginals are $\gamma^d$ and $\nu$. As a result, this formulation enables sample generation from $\nu$ in finite time, and provides a compelling alternative to traditional score-based generative approaches.


Several algorithms have been developed to compute or approximate solutions to the generalized SB problem, especially when the reference process is not necessarily Brownian motion. Among these, two families of algorithms stand out: the Iterative Proportional Fitting (IPF) algorithm (also known as the Sinkhorn algorithm) \cite{cuturi2013sinkhorn,nutz3373introduction}, and the Iterative Markovian Fitting (IMF) algorithm (also known as Iterated Diffusion Bridge Mixture) \cite{peluchetti2023diffusion,shi2024diffusion}.  IPF, on one hand, can be interpreted as an alternative minimization scheme where the marginals constraints are alternatively relaxed, which leads to a sequence of forward and backward diffusion processes with the target measures $\mu$ and $\nu$ as initial distributions. On the other hand, IMF defines a sequence of stochastic interpolations between $\mu$ and $\nu$ processes and their Markov projections, building on the Diffusion Flow Matching framework \cite{peluchetti2021bridge,albergo2022building} and extending it recursively. In addition, practical and approximate implementations of IPF and IMF have been proposed. In particular, the Diffusion Schrödinger Bridge  algorithm~\cite{de2021diffusion} along with its Matching version~\cite{shi2024diffusion, peluchetti2023diffusion} or the Iterative Proportional Maximum Likelihood (IPML) algorithm~\cite{vargas2021solving}

\vspace{0.5em}
Although empirical performance of IMF is promising, its theoretical analysis remains limited. In particular, existing works only provide asymptotic convergence results~\cite{shi2024diffusion, peluchetti2023diffusion}, and no non-asymptotic rates or guarantees are currently known. In this paper, we fill this gap by providing the first non-asymptotic convergence guarantees for IMF. Under mild assumptions on the reference diffusion process and for sufficiently large time horizon $T$, we derive explicit convergence bounds in KL divergence. Our analysis distinguishes two important regimes: strongly log-concave marginals and weakly log-concave marginals — the latter being a more general class that includes multimodal and non-convex distributions~\cite{conforti2024weak}.
A central technical contribution of our work is the derivation of a new contraction estimate for the Markovian projection operator (see \Cref{thm:contraction_log_concave}), which plays a fundamental role in controlling the error propagation over iterations. Our main results — \Cref{thm:main_log_concave_setting} and \Cref{thm:main_log_sobolev_setting} — establish the first quantitative convergence rates for Schrödinger-bridge-based generative models.

\paragraph{Outline of the Paper.}
\Cref{sec:sb_imf} introduces the necessary background on the Schrödinger Bridge problem and details the Iterative Markovian Fitting (IMF) algorithm. \Cref{sec:main_results} presents our main theoretical results under both the strongly log-concave (\Cref{sec:strong_setting}) and weakly log-concave (\Cref{sec:weak_setting}) regimes. \Cref{sec:related_literature} discusses connections to prior work.
Full technical details and supporting lemmas are deferred to the supplement.

\paragraph{Notation. }
For a metric space $(\mse,d)$, denote by $\mathcal{P}(\mse)$ the set of probability measures on $\mse$ and by $\mcbb(\mse)$ its Borel $\sigma$-field. Given two probability measures $\mu, \nu \in \mathcal{P}(\mse)$, the relative entropy (or $\KL$-divergence) of $\mu$ with respect to $\nu$ is defined by $\KL(\mu |\nu) := \int \log (\rmd \mu /\rmd \nu) \rmd\mu$ if $\mu$ is absolutely continuous with respect to $\nu$, and $\KL(\mu |\nu) := +\infty$ otherwise. Note that we also consider an extension of this definition to the case where $\nu$ is only a $\sigma$-finite measure on $\mse$. Similarly, the Fisher information of $\mu$ with respect to $\nu$ is defined by $\fisher(\mu |\nu) := \int \|\nabla\log (\rmd \mu /\rmd \nu)\|^2 \rmd\mu$ if $\mu$ is absolutely continuous with respect to $\nu$, and $\fisher(\mu |\nu) := +\infty$ otherwise. When, $\mse=\rset^d$, denote by $\Pi(\mu,\nu)$ the set of couplings between $\mu$ and $\nu$, \ie ,
  $\xi\in\Pi(\mu,\nu)$ if and only if $\xi$ is a probability measure on $\rset^d\times\rset^d$ and $\xi(\msa \times \rset^d) = \mu(\msa)$ and $\xi(\rset^d\times \msa) = \nu(\msa)$ for all measurable $\msa \subseteq \rset^d$.  If $\mu$ and $\nu$ have finite second moment, the $2-$Wasserstein distance is defined by $\wasserstein_2^2 (\mu, \nu) := \inf_{\xi \in \Pi(\mu,\nu)} \int \norm{x-y}^2 \rmd \xi(x,y)$. Denote by $\gamma^d$ the density of the standard Gaussian distribution on $\rset^d$. With abuse of notation, we identify the standard Gaussian measure with its density. For $d\in \mathbb{N}$, denote by $\text{Leb}^{d}$ the Lebesgue measure on $\rset^d$. Denote by $\mathcal{C}_T=\mathcal{C}([0,T])$ the set of continuous functions defined on the time-interval $[0,T]$. Equipped with the $\mathrm{L}^\infty$-norm, $\|\cdot\|_{\infty}$, refer to it as Wiener space. Given $\mathbb{P}\in \mcp(\mathcal{C}_T)$ and $s,t\in [0,T]$, denote by $\PP_{t}\in \mathcal{P}(\rset^d)$, $\PP_{s,t}\in \mathcal{P}(\rset^{2d})$, $\PP_{|s,t}\in \mathcal{P}(\mathcal{C}_T)$ and $\PP_{[s,t]}\in \mathcal{P}(\mathcal{C}([s,t]))$ respectively the marginal time distribution at time $t$, the joint law at times $s$ and $t$, the conditional distribution with respect to the marginals at times $s$ and $t$, and the the restriction to the time sub-interval $[s,t]$ of $\PP$. Also, we denote by $\PP^{\mathrm{R}}\in \mcp(\mathcal{C}_T)$ the reverse-time measure of $\PP$, \ie, the path-measure defined, for any $\msa\in \mathcal{B}(\mathcal{C}_T)$, as $\PP^{\mathrm{R}}(\msa)=\PP(\msa^{\mathrm{R}})$ where $\msa^{\mathrm{R}}:=\{t\mapsto \omega(T-t)\; :\; \omega\in \msa\}$. Given two matrices $\mathrm{A}, \mathrm{B}\in \rset^{d\times d}$ we write $\mathrm{A}\succeq \mathrm{B}$ if $\mathrm{A}-\mathrm{B}$ is positive semi-definite. Given two vectors $x,y\in\rset^d$, we denote by $\ps{x}{y}$ and $\|x\|$ the standard scalar product between $x$ and $y$ and the Euclidean norm of $x$.
  Last, denote by $\mathrm{Lip}_{\le 1}(\rset^d)$ the set of Lipschitz functions $f:
  \rset^d \rightarrow \rset$ with Lipschitz constant smaller than $ 1.$\\

\section{Schrödinger Bridge Problem and Iterative Markovian Fitting}\label{sec:sb_imf}

In this section, we present the SB problem in details and its resolution through IMF.

\paragraph{Schrödinger Bridge problem.}
The first key tool for defining the SB problem is a path measure $\rmRU$ on $\mathcal{C}_T$, referred to as the reference measure. We consider in this paper for $\rmRU$ the distribution of the process $(\bfX_t)_{t\in\ccint{0,T}}$ solution of the Langevin dynamics
\begin{align}\label{def:reference_measure_SDE}
    \rmd \mathbf{X}_t=-\nabla U(\mathbf{X}_t)\rmd t+\sqrt{2}\rmd B_t\eqsp, \quad t\in [0,T]\eqsp,\quad
    \mathbf{X}_0\sim \mathbf{m}(\rmd x)\propto\exp(-U(x))\rmd x\eqsp,
\end{align}
where $U : \rset^d\to \rset$ is a potential.
We also consider the special case $U\equiv 0$ which corresponds to taking as $\rmRU$ the measure on $\mathcal{C}_T$ associated with the Brownian motion initialized at the volume measure \cite[Annexe A]{leonard2013survey}.
We make the following assumption on \eqref{def:reference_measure_SDE}
\begin{assumption}
\label{ass:uniq_exist_station_solution_langevin}
  Either $U = 0$ or  the Langevin system \eqref{def:reference_measure_SDE} has a unique stationary solution.
\end{assumption}
Under mild assumptions on $U$, \Cref{ass:uniq_exist_station_solution_langevin} holds; see \eg, \cite{kent1978time}.

Equipped with $\rmRU$, given the two marginal distributions \(\mu, \nu \in \mathcal{P}(\mathbb{R}^d)\), the SB problem can be expressed as the following minimization problem:
\begin{align}\label{def:schrodinger_prob_dynamical}
 \text{ minimize } \KL (\PP | \mathrm{R}^U)\eqsp, \, \text{ under the constraint } \PP \in \mathcal{P}(\mathcal{C}_T)\; , \; \PP_0=\mu\, , \, \PP_T=\nu \eqsp.
\end{align}
This formulation is known as the Dynamical Schrödinger Bridge Problem, where the optimization is performed over measures defined on the path space.
Interestingly, this dynamic problem has an equivalent static formulation \cite[Proposition 2.3]{leonard2013survey}:
\begin{align}\label{def:schrodinger_prob_static}
\text{ minimize }  \KL (\pi | \mathrm{R}^U_{0,T})\eqsp, \, \text{ under the constraint } \pi \in \Pi(\mu, \nu)\eqsp,
\end{align}
where
$\mathrm{R}^U_{0,T}$ denotes the joint law between the marginal distributions at times $t=0$ and $t=T$ of $\mathrm{R}^U$.   This formulation is known as the Static Schr\"odinger Bridge problem.
To ensure existence of a unique solution for \eqref{def:schrodinger_prob_dynamical}-\eqref{def:schrodinger_prob_static}, we consider the following assumption.
\begin{assumption}
\label{ass:uniq_solution_SB}
    There exists (at least) a coupling $\pi\in \Pi(\mu,\nu)$ such that $\KL(\pi|\mathrm{R}^U_{0,T})<+\infty.$
\end{assumption}
Under \Cref{ass:uniq_solution_SB}, \cite[Theorem 2.1.]{nutz3373introduction}  shows that problem \eqref{def:schrodinger_prob_static} admits a unique solution, called the \textit{Schrödinger Bridge}, which can be expressed as
\begin{align}\label{def:schrodinger_bridge}
    \pi^{\star} (\rmd x, \rmd y)= \exp\left(-\varphi(x)-\psi(y)\right) \mathrm{R}^U_{0,T}(\rmd x, \rmd y)\eqsp,
\end{align}
where $\varphi,\psi: \rset^d \to \ocint{-\infty,\plusinfty}$.

Consider the bridge $\mathrm{b}\mathrm{R}^U$ associated with $\mathrm{R}^U$, \ie, the Markov kernel on $\rset^{2d}\times\mathcal{C}_T$ corresponding to the conditional distribution of $(\bfX_t)_{t \in\ccint{0,T}}$ given $(\bfX_0,\bfX_T)$ and therefore satisfying for any $\msa\in \mathcal{B}(\mathcal{C}_T)$,  $\mathrm{R}^U(\msa)=\int\rmd\mathrm{R}^U_{0,T}(x_0, x_T)  \mathrm{b}\mathrm{R}^U((x_0, x_T), \msa)$. Equipped with  $\mathrm{b}\mathrm{R}^U$, we can easily construct an optimal solution for \eqref{def:schrodinger_prob_dynamical} based on \eqref{def:schrodinger_bridge}. Indeed, denoting by $\PP^{\star}$  the path distribution of $(X_t^\star)_{t \in\ccint{0,T}}$ such that $(X_0^\star,X_T^\star) \sim \pi^{\star}$ and $(X^\star_t)_{t \in \ccint{0,T}} \sim \mathrm{b}\mathrm{R}^U((X_0^\star,X_T^\star),\cdot)$, and using that for any $\mathbb{P} \in \mcp(\mathcal{C}_T)$ it holds \cite[Annexe A]{leonard2013survey}
\begin{equation}
   \KL (\PP |\mathrm{R}^U) = \KL (\PP_{0,T} | \mathrm{R}^U_{0,T}) + \PE[\KL (\PP_{|0,T}(\cdot |(\bar{X}_0,\bar{X}_T)) | \mathrm{R}^U_{|0,T}(\cdot |(\bar{X}_0,\bar{X}_T)))]\eqsp,
\end{equation}
with $(\bar{X}_0,\bar{X}_T) \sim \PP_{0,T}$, we immediately get that $\PP^{\star}$ is a solution for \eqref{def:schrodinger_prob_dynamical}. For further details on SB, we refer to the two surveys \cite{leonard2013survey} and \cite{nutz3373introduction}.

One of the most famous scheme to solve   \eqref{def:schrodinger_prob_dynamical} (or equivalently \eqref{def:schrodinger_prob_static}) is the Iterative Proportional Fitting (IPF) algorithm, also known as the Sinkhorn algorithm. This scheme roughly consists in solving this problem, relaxing the constraints on one of the marginals alternatively. However, these steps are typically intractable but, leveraging learning and statistical techniques, \cite{debortoli2021neurips,vargas2021solving} propose practical implementation for IPF.

Besides IPF, a recent class of algorithms have recently emerged  as alternative for solving \eqref{def:schrodinger_prob_dynamical}. They can be interpreted as an extension of Diffusion Flow Matching models that we now present.

\paragraph{Diffusion Flow Matching Models.}
 Given the two marginal distributions \(\mu, \nu\in \mcp(\rset^d)\), Diffusion Flow Matching (DFM) models learn to transform samples from $\mu$ into samples from $\nu$ by following a continuous stochastic flow. The key idea is to construct an interpolating process, or bridge, between $\mu$ and $\nu$. To formalize this, again, we consider the bridge $\mathrm{b} \mathrm{R}^U$ associated to \eqref{def:reference_measure_SDE} and  a coupling $\pi_{0,T} \in \Pi(\mu, \nu)$. We then define, the resulting stochastic interpolant $(Y_t)_{t\in\ccint{0,T}}$ as
\begin{align}\label{def:stochastic_interpolant}
    (Y_0, Y_T) \sim \pi_{0,T}, \quad (Y_t)_{t\in\ccint{0,T}} | (Y_0, Y_T) \sim \mathrm{b}\mathrm{R}^U((Y_0, Y_T),\cdot)\eqsp.
\end{align}
The process $(Y_t)_{t\in [0,T]}$ provides a smooth, random evolution from $\mu$ to $\nu$, but it is not a Markov process in general and does not satisfy a simple Stochastic Differential Equation (SDE), making direct sampling difficult. However, we can consider the non-homogeneous Markov diffusion process
$(X^{(1)}_t)_{t\in [0,T)}$, solution of the SDE
\begin{align}\label{eq:SDE_markovian_projection}
    \rmd X^{(1)}_t = f^{(1)}_t(X^{(1)}_t) \rmd t + \sqrt{2} \rmd B_t \eqsp, \quad t\in [0,T)\eqsp, \quad X^{(1)}_0 \sim \mu \eqsp,
\end{align}
where, for any $t \in [0,T)$, the \textit{mimicking drift} is given, for any $y_t\in\rset^d$, by
\begin{align}\label{def:mimicking_drift}
    f^{(1)}_t(y_t) = \mathbb{E}\left[\overrightarrow{\upphi}_t(Y_t,Y_T)\big| Y_t = y_t\right] - \nabla U (y_t)\eqsp.
\end{align}
Here, for any $t \in [0,T)$, the vector field $\overrightarrow{\upphi}_t$ is defined,  for any $y_t,y_T  \in\rset^d$, by
\begin{equation}
  \label{eq:def_upphi}
 \overrightarrow{\upphi}_t(y_t,y_T) =  2\nabla_{y_t}\log p^U_{T|t}(y_T|y_t)\eqsp,
\end{equation}
where $p^U_{T|t}$ denotes the conditional density with respect to the Lebesgue measure of $\mathbf{X}_T$ given $\mathbf{X}_t$.
Indeed, under the technical conditions \Cref{ass:density_langevin} and \Cref{ass:FK} given in the supplement, it can be shown (see \Cref{theo:markov_proj} in \Cref{sec:on_markovian_projection}) that the Markov process $(X^{(1)}_t)_{t\in [0,T]}$ preserves the same marginal distributions as $(Y_t)_{t\in [0,T)}$, \ie,
\begin{align}\label{def:markovian_projection}
    X^{(1)}_t \eqlaw Y_t\eqsp, \quad t\in [0,T]\eqsp.
\end{align}
In particular, $X^{(1)}_0 \sim \mu$ and $X^{(1)}_T \sim \nu$, that is $(X^{(1)}_t)_{t\in [0,T]}$ interpolates between $\mu$ and $\nu$.
This process is known as the Markovian Projection of the stochastic interpolant and  originates from \cite{gyongy1986mimicking} and \cite{krylov1984relation}.

In its most conventional implementation, DFM models typically consider either $U \equiv 0$ or a quadratic potential $U$, corresponding to an Ornstein–Uhlenbeck process. In such cases, the conditional log-density $(y_t, y_T) \mapsto \log p^U_{T|t}(y_T|y_t)$ is available in closed form. The drift function $f^{(1)}_t$ is then the solution to a regression problem, which can be learned from samples of the process $(Y_t)_{t \in [0, T]}$, often using a neural network approximation.
In practice, DFMs generate approximate samples from the target distribution $\nu$ by applying the Euler–Maruyama scheme to discretize the SDE in \eqref{eq:SDE_markovian_projection}, substituting the true drift $f^{(1)}_t$ with its learned approximation. This approach offers an efficient and trainable method to transform samples from $\mu$ into samples from $\nu$ via a continuous stochastic flow.

It turns out that DFM models can be extended to solve the SB problem~\eqref{def:schrodinger_prob_dynamical} as we now describe.

\paragraph{Iterative Markovian Fitting}
Iterative Markovian Fitting (IMF) (also known as Iterated Diffusion Bridge Mixture) \cite{shi2024diffusion, peluchetti2023diffusion} is an algorithm designed to iteratively approximate the Schrödinger Bridge. It builds upon the Diffusion Flow Matching (DFM) framework and can be interpreted as a recursive extension of it. At a high level, a DFM model takes a given coupling $\pi_{0,T} \in \Pi(\mu, \nu)$ as initial input, constructs the corresponding stochastic interpolant $(Y_t)_{t\in [0,T]}$ \eqref{def:stochastic_interpolant}, computes its Markovian projection $(X^{(1)}_t)_{t\in [0,T]}$ \eqref{eq:SDE_markovian_projection}, and outputs a path measure $\mathbb{P}^{(1)}$  and a  new coupling $\pi^{(1)}_{0,T} = \PP^{(1)}_{0,T}\in \Pi(\mu,\nu)$ corresponding to the distribution of   $(X_t^{(1)})_{t\in\ccint{0,T}}$ and the pair $(X_0^{(1)},X_T^{(1)})$ respectively. By iterating this procedure multiple times, we obtain the IMF algorithm whose pseudo-code is provided in \Cref{alg:IMF_version_1}.

\bigskip
\begin{algorithm}[H]
  \caption{Iterative Markovian Fitting}\label{alg:IMF_version_1}
  \SetAlgoNoLine
  \SetAlgoNoEnd
  \DontPrintSemicolon
\textbf{Input:} Initial coupling $\pi_{0,T}^{(0)}\in\Pi(\mu,\nu)$ and Langevin bridge $\mathrm{b} \rmRU$ associated with~\eqref{def:reference_measure_SDE}\\
\textbf{For each iteration} $k = 0, \dots, N-1$:
\begin{minipage}{\dimexpr\textwidth-2\algomargin\relax}
 \begin{enumerate}[label=\textbf{Step \arabic*.}]
 \item      \textbf{Stochastic Interpolant Update.}\\
     \tcr{Define $(Y^{(k+1)}_t)_{t\in[0,T]}$} as:
    {\small\begin{equation}
        \left(Y^{(k+1)}_0, Y^{(k+1)}_T\right) \sim \pi^{(k)}_{0,T}\eqsp, \quad Y^{(k+1)}_{[0,T]} \big| \left(Y^{(k+1)}_0, Y^{(k+1)}_T\right) \sim  \mathrm{b}\mathrm{R}^U\left(\left(Y^{(k+1)}_0, Y^{(k+1)}_T\right), \cdot\eqsp\right)\eqsp.
    \end{equation}}
    \item  \textbf{ Markovian Projection Update.}\\
      \begin{itemize}
      \item \tcr{Consider $\PP^{(k+1)}$} as the path-distribution of the solution $(X_t^{(k+1)})_{t\in\ccint{0,T}}$ to the  SDE:
        \begin{equation}
          \label{eq:markov_proj_pseudo_code}
          \rmd X^{(k+1)}_t = f^{(k+1)}_t(X^{(k+1)}_t) \rmd t + \sqrt{2} \rmd B_t, \quad X^{(k+1)}_0 \sim \mu\eqsp,
        \end{equation}
        where:
        \begin{equation}
          \label{eq:def_f_k_1}
          f^{(k+1)}_t(y_t) = \mathbb{E}\left[\overrightarrow{\upphi}_t\left(Y^{(k+1)}_t,Y^{(k+1)}_T\right) \big| Y^{(k)}_t = y_t \right]-\nabla U(y_t)\eqsp,
        \end{equation}
        and $\overrightarrow{\upphi}_t$ is given in~\eqref{eq:def_upphi}.

      \item  \tcr{Set $\pi^{(k+1)}_{0,T} = \PP^{(k+1)}_{0,T}$}.
      \end{itemize}
    \end{enumerate}
  \end{minipage}
  \textbf{End For}\\
\textbf{Output:}  {The path-distribution $\PP^{(N)}$ and the coupling $\pi^{(N)}_{0,T} \in \Pi(\mu, \nu)$}
\end{algorithm}
\bigskip

It has been shown \cite{leonard2013survey,leonard2014reciprocal} that the SB coupling $\pi^{\star}$, defined in \eqref{def:schrodinger_bridge}, is the unique fixed point from the two Steps defining~\Cref{alg:IMF_version_1}, which can therefore be interpreted as a fixed point algorithm.
Therefore, the sequences $\{\PP^{(n)}\}_{n\in\nset}$ and $\{\pi^{(n)}_{0,T}\}_{n\in\nset}$ defined in \Cref{alg:IMF_version_1} are expected to converge to the SB solutions $\PP^{\star}$ and   $\pi^\star$. In particular, asymptotic convergence has been established in
\cite{shi2024diffusion,peluchetti2023diffusion} under appropriate conditions.
Further discussions on IMF, and its connection with Sinkhorn and Diffusion Schrödinger Bridge, are postponed in \Cref{sec:geometric_interpretation_of_IMF}.\\

During computational implementation, at each step $k$, the mimicking drift $f^{(k+1)}_t$ defined in \eqref{eq:def_f_k_1} is learned by solving a regression problem using neural networks, following the same approach as in DFMs. This leads to the first version of the Diffusion Schrödinger Bridge Matching (DSBM) algorithm. However, from a practical standpoint, it has been observed that to avoid bias accumulation, it is beneficial to leverage not only the fact that $\PP^{(k+1)}$ is the law of a diffusion process but also its time-reversal. Denote by $\tilde{\PP}^{(k+1)}$ the distribution of  the time reversal $(\tilde{X}^{(k+1)}_t)_{t \in\ccint{0,T}}$ of $(X_t^{(k+1)})_{t\in\ccint{0,T}}$ defined for any $t\in\ccint{0,T}$ by  $\tilde{X}^{(k+1)}_t =  X^{(k+1)}_{T-t}$. It can be shown that $(\tilde{X}^{(k+1)}_t)_{t \in [0,T)}$ is solution of  the SDE (see \Cref{prop:backward_markovian_projection} in \Cref{sec:on_the_backward_markovian_projection})
\begin{equation}
  \label{eq:markov_project_g}
  \rmd \tilde{X}^{(k+1)}_t = g^{(k+1)}_t(\tilde{X}^{(k+1)}_t) \rmd t + \sqrt{2} \rmd B_t\eqsp,\quad t\in [0,T)\eqsp, \quad \tilde{X}^{(k+1)}_0 \sim \nu \eqsp,
\end{equation}
where, for any $t\in [0,T)$, the drift function is given, for any $y_{T-t}\in \rset^d$, by
\begin{equation}
  \label{eq:def_g_k_1}
  g^{(k+1)}_t(y_{T-t}) = \mathbb{E}\left[\overleftarrow{\upphi}_{t}\left(Y^{(k+1)}_0,Y^{(k+1)}_{T-t}\right) \big| Y^{(k)}_{T-t} = y_{T-t} \right]+\nabla U(y_{T-t}) \eqsp,
\end{equation}and, for any $t\in [0,T)$, the vector field $\overleftarrow{\upphi}_t$ is defined, for any $y_0,y_{T-t}\in \rset^d$, by
\begin{align}
    \overleftarrow{\upphi}_t(y_0,y_{T-t})=2\nabla_{y_{T-t}}\log p^U_{T-t|0}(y_{T-t}|y_0)\eqsp.
\end{align}
Since up to time inversion, $\tilde{\PP}^{(k+1)}$ and ${\PP}^{(k+1)}$ are equivalent, an alternative to Step 2 in~\Cref{alg:IMF_version_1} is to consider $\tilde{\PP}^{(k+1)}$ instead of $\PP^{(k+1)}$, and the resulting coupling $(\tilde{X}^{(k+1)}_T,\tilde{X}^{(k+1)}_0) \in \Pi(\mu,\nu)$.  While this substitution is equivalent for the idealized IMF scheme, it becomes non-trivial when approximating the mimicking drifts. Indeed, comparing \eqref{eq:markov_proj_pseudo_code}–\eqref{eq:def_f_k_1} with \eqref{eq:markov_project_g}–\eqref{eq:def_g_k_1}, we observe that the two processes are initialized at different distributions: the forward diffusion starts from $\mu$, while the time-reversed process starts from $\nu$. Alternating between Step 2 and its time-reversed variant thus mitigates the bias introduced by using approximate rather than exact mimicking drifts. This strategy leads to the final version of the DSBM algorithm proposed in \cite{shi2024diffusion, peluchetti2023diffusion}. Further details are provided in \Cref{sec:on_the_backward_markovian_projection}.

\section{Main Results}\label{sec:main_results}
In this section, we establish exponential convergence guarantees in Kullback–Leibler (KL) divergence for the IMF  \Cref{alg:IMF_version_1}, under two different settings: the strongly log-concave and the weakly log-concave regimes. Although \cite[Theorem 8]{shi2024diffusion} and \cite[Theorem 2]{peluchetti2023diffusion} demonstrate that, under strong assumptions, the IMF sequence $\{\pi^{(n)}_{0,T}\}_{n}$ converges asymptotically in KL divergence to the Schrödinger Bridge, \ie, $ \lim_{n\to +\infty} \KL(\pi^{(n)}_{0,T}\big|\pi^{\star})=0,$ neither work provides quantitative bounds on the convergence rate. This section aims to fill that gap.

Before proceeding further, we introduce an assumption on the reference measure \eqref{def:reference_measure_SDE}, which will be considered in both the strong and weak log-concavity settings. Recall that for $0<s<t $, we denote by $p_{t|s}^U$ the conditional distribution of $\bfX_t$ given $\bfX_s$ where $(\bfX_t)_{t\geq 0}$ is the unique stationary solution of \eqref{def:reference_measure_SDE} under \Cref{ass:uniq_exist_station_solution_langevin}.
\begin{assumption}\label{ass:main_reference_measure}
     There exists $L_U>0$ such that, for any $0\le s<t\le T$ and any $x,y\in \rset^d$, the maps $z\mapsto \nabla_y\log p^U_{t|s}(z|y)$ and $z\mapsto \nabla_y\log p^U_{t|s}(y|z)$ are $L_U(s,t)$-Lipschitz with
\begin{align}\label{ass:lipschitz_constant_transition_density_of_reference_measure}
         L_U(s,t)\le \frac{L_U}{2(t-s)}\eqsp.
    \end{align}
\end{assumption}
\begin{remark}
\label{rem:lip_h3}
    First note in the case where for any $0\le s<t\le T$, $(z,x) \mapsto \log p^U_{t|s}(y|z)$ is $\rmC^2$, then \Cref{ass:main_reference_measure} is equivalent to the fact that $(z,y) \mapsto \nabla_z \nabla_y\log p^U_{t|s}(y|z)$ is bounded by $L(s,t)$.
\end{remark}

\begin{remark}
We observe that in the common setting where  $U\equiv 0$, the above condition is satisfied. Indeed, in that case, it holds
    \begin{align}\label{def:heat_kernel}
    p^U_{t|s}(y|x)=\frac{1}{(4\pi (t-s))^{d/2}}\exp\left(-\frac{\|y-x\|^2}{4(t-s)}\right)\eqsp, \quad t,s\in [0,T]\eqsp,\quad s<t\eqsp, \quad x,y\in \rset^d\eqsp.
\end{align}Therefore, for any $0\le s<t\le T$ and any $x,y\in \rset^d$, they hold
    \begin{align}
        \nabla_x \log p^U_{t|s}(z|x)= \frac{z-x}{2(t-s)}\eqsp, \quad \nabla_y \log p^U_{t|s}(y|z)= -\frac{y-z}{2(t-s)}\eqsp.
    \end{align}
    Similar conclusions hold for Ornstein-Ulhenbeck processes, \ie, as $U(x)= \ps{\mathbf{A}x}{x}$. Indeed, if this is the case, then $\nabla U(x) = 2\mathbf{A}x$ and the SDE \eqref{def:reference_measure_SDE} becomes a linear Ornstein--Uhlenbeck (OU) process:
\begin{equation}\label{eq:OU_process}
    \rmd \mathbf{X}_t = -2\mathbf{A} \mathbf{X}_t \rmd t + \sqrt{2} \rmd B_t\eqsp, \quad t\in [0,T]\eqsp,\quad  \mathbf{X}_0\sim \mathbf{m}(\rmd x)\eqsp.
\end{equation}
This process admits a Gaussian transition density. More specifically, 
\begin{equation}
    p^U_{t|s}(z|x) = \frac{1}{\sqrt{(2\pi)^d\det \Sigma_{t|s}}} \exp\left(-\frac{1}{2}(z - m_{t|s}(x))^\top \Sigma_{t|s}^{-1} (z - m_{t|s}(x))\right),
\end{equation}
where
\begin{align}
    m_{t|s}(x) = \rme^{-2\mathbf{A}(t-s)}x\eqsp, \quad \Sigma_{t|s} = 2\int_s^t e^{-2\mathbf{A}(t-u)} \rme ^{-2\mathbf{A}^\top(t-u)} \rmd u\eqsp.
\end{align}
Therefore, it holds
\begin{align}
    \nabla_x \log p^U_{t|s}(z|x) = \left( \nabla_x m_{t|s}(x) \right)^\top\Sigma_{t|s}^{-1} (z - m_{t|s}(x)) = \rme^{-2\mathbf{A}(t-s)}{}^\top \Sigma_{t|s}^{-1} (z - \rme ^{-2\mathbf{A}(t-s)}x)\eqsp.
\end{align}
It follows that the map $z \mapsto \nabla_x \log p^U_{t|s}(z|x)$ is affine, hence Lipschitz. Being
\begin{align}\label{eq:first_estimate_ou}
    \|\nabla_z\nabla_x \log p^U_{t|s}(z|x)\| \leq \| \rme^{-2\mathbf{A}(t-s)} \| \cdot \| \Sigma_{t|s}^{-1} \| \eqsp,
\end{align}
with 
 \begin{align}
    \Sigma_{t|s}^{-1} \preceq \frac{1}{2(t-s)}\left( \rme^{-2\mathbf{A}(t-s)} \rme^{-2\mathbf{A}^\top(t-s)} \right)^{-1}\Id \eqsp,
\end{align}
then, it holds
\begin{align}
    \|\nabla_z \nabla_x\log p^U_{t|s}(z|x)\| \leq \| \rme^{-2\mathbf{A}(t-s)} \| \cdot \| \Sigma_{t|s}^{-1} \| \leq \frac{L_U}{2(t-s)}\eqsp,
\end{align}
 with $L_U$ depending only on $\mathbf{A}$. This and \Cref{rem:lip_h3} complete the proof. 
    We expect that Assumption \Cref{ass:main_reference_measure} holds true also if $U$ is infinitely differentiable with bounded derivatives, but this problem is out of the scope of the paper and left for future work.
\end{remark}

In the sequel, for any measure $\zeta$ on $ (\rset^d,\mcbb(\rset^d))$ absolutely continuous with respect to the Lebesgue measure, we denote by $\frakp_{\zeta}$ its density. Furthermore, we denote by $\bfm_{0,t,T}$ and $p^U_{0,t,T}$ the probability distribution and the density with respect to the Lebesgue measure respectively of $(\mathbf{X}_0, \mathbf{X}_t,\mathbf{X}_T)$ with $(\mathbf{X}_t)_{t\in [0,T]}$ as in \eqref{def:reference_measure_SDE}, for $t \in\ooint{0,T}$.

\subsection{Strongly Log-Concave Setting}\label{sec:strong_setting}

We consider in this section the following additional assumptions.

\begin{assumption}
  \label{ass:density_smooth}
  The two distributions $\mu$ and $\nu$ are absolutely continuous with respect to the Lebesgue measure and have positive density functions $\frakp_\mu$ and $\frakp_\nu$. 
  In addition, $ x \mapsto \log \frakp_{\mu}(x)$ and $x \mapsto \log \frakp_{\nu}(x)$ are twice continuously differentiable. In addition, $(x_0,x_T) \mapsto -\log p^U_{0,t,T}(x_0,x_t,x_T)$ is twice continuously differentiable.
\end{assumption}

\begin{assumption}
  \label{ass:strongly_log_concave}
  \begin{enumerate}[wide, labelwidth=!, labelindent=0pt,label=(\roman*)]
  \item  \label{ass:strongly_log_concave_i} The two distributions $\mu$ and $\nu$ are strongly
    log-concave, \ie, there exist
    $\alpha_\mu, \alpha_\nu\in (0, +\infty)$ such that for any
    $x \in \rset^d$,
    \begin{align}\label{hyp:main_log_concave_setting_limit_case}
        \nabla^2 (-\log \frakp_{\mu})(x) \succeq  \alpha_\mu \Id \eqsp, \quad \nabla^2 (-\log\frakp_{\nu})(x) \succeq \alpha_\nu \Id \eqsp.
    \end{align}
  \item   \label{ass:strongly_log_concave_ii}
 There exists $\alpha  \in [0, +\infty)$ such that for any $x_0,x_t,x_T \in \rset^d$, $t \in \ooint{0,T}$,
    \begin{equation}
      \label{eq:2}
      \nabla_{x_0,x_T}^2 (-\log p^U_{0,t,T})(x_0,x_t,x_T) \succeq \alpha \Id\eqsp,
    \end{equation}
    where $\nabla_{x_0,x_T}^2 (-\log p^U_{0,t,T})(x_0,x_t,x_T)$ denotes the Hessian with respect to the variables $(x_0,x_T)$.
  \end{enumerate}
\end{assumption}

\begin{remark}\label{rem:condition_on_joint_in_classical_case}
  We remark that in the common setting $U\equiv 0$ in \eqref{def:reference_measure_SDE}, the part of \Cref{ass:density_smooth} and \Cref{ass:strongly_log_concave} related to $-\log p^U_{0,t,T}$ holds.  Indeed, when $U\equiv 0$, it follows from \eqref{def:heat_kernel} that
\begin{align}
    p^U_{0,t,T}(x_0,x,x_T)
                          &
   =\exp\left(-\frac{\|x_0-x_T\|^2}{4T}-\frac{\|Tx-(T-t)x_0-tx_T\|^2}{4tT(T-t)}\right)\eqsp,
\end{align}
which makes it clear that $-\log p^U_{0,t,T}$ satisfies \Cref{ass:density_smooth} and \Cref{ass:strongly_log_concave} since for any $x_0,x_t,x_T\in\rset^d$ and $t\in\ooint{0,T}$, $ \nabla^2_{x_0,x_T}[-\log p^U_{0,t,T}](x_0,x_t,x_T)\succeq 0.$ The same conditions hold true when $U$ is quadratic,
\ie, when $U(x)= \ps{\mathbf{A}x}{x}$ for a positive definite matrix $\mathbf{A}$. In this setting, the process $(\mathbf{X}_t)_{t \in [0,T]}$ defined in \eqref{def:reference_measure_SDE} is a multivariate Ornstein--Uhlenbeck process. Consequently, for any $0 < t < T$, the joint distribution of the triple $(\mathbf{X}_0, \mathbf{X}_t, \mathbf{X}_T) \in \rset^{3d}$ is a multivariate Gaussian, $(\mathbf{X}_0, \mathbf{X}_t, \mathbf{X}_T) \sim \mathrm{N}(\mu, \Sigma),$ for some mean vector $\mu \in \rset^{3d}$ and for a positive definite covariance matrix $ \Sigma \in \rset^{3d \times 3d}$.
This immediately implies that \Cref{ass:density_smooth} and \Cref{ass:strongly_log_concave} are satisfied. Indeed, let $[\Sigma^{-1}]_{x_0,x_T}$ denote the principal submatrix of $\Sigma^{-1}$ corresponding to the variables $x_0$ and $x_T$. Then,
\begin{align}
    \nabla^2_{x_0,x_T} \left( -\log p^U_{0,t,T}(x_0,x_t,x_T) \right) = [\Sigma^{-1}]_{x_0,x_T} \succeq \alpha \Id \eqsp,
\end{align}
where the final inequality follows from the positive definiteness of $\Sigma $, which implies that all its principal submatrices--and thus those of $\Sigma^{-1}$--are also positive definite. Therefore, there exists $\alpha > 0$ such that the above inequality holds.
\end{remark}

This setting ensures robust stability properties, enabling strong convergence guarantees for IMF. The following theorem formalizes our result in this regime.

\begin{theorem}\label{thm:main_log_concave_setting_limit_case}
  Assume \Cref{ass:uniq_exist_station_solution_langevin} to \Cref{ass:strongly_log_concave}.  Let $\{\pi^{(n)}_{0,T}\}_{n\ge 1}$ be the IMF sequence defined in \Cref{alg:IMF_version_1}. If $T>\max\{\alpha_\mu^{-1}, \alpha_\nu^{-1}\}$, then, for any $n\in \mathbb{N}$, it holds
    \begin{align}
        \KL(\pi^{(n)}_{0,T}|\pi^{\star})\le \left(\frac{L_U}{T(\alpha_{\varphi}+\alpha_{\psi}+\alpha)} \right)^n \KL(\pi_{0,T}|\pi^{\star})\eqsp,
    \end{align}
    where $\alpha_\varphi=\alpha_\mu-T^{-1}$ and $\alpha_\psi=\alpha_\nu-T^{-1}$.
\end{theorem}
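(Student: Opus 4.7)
The plan is to reduce the statement to a single-step contraction estimate for the IMF operator and then iterate. One full IMF step $\pi^{(k)}_{0,T}\mapsto \pi^{(k+1)}_{0,T}$ is the composition of two operations that both fix $\pi^\star$: the reciprocal projection, which replaces the conditional law given the endpoints by $\mathrm{b}\mathrm{R}^U$ while keeping $\pi^{(k)}_{0,T}$ as endpoint coupling, and the Markovian projection (described by \eqref{eq:SDE_markovian_projection}--\eqref{def:mimicking_drift}), whose endpoint law is $\pi^{(k+1)}_{0,T}$. Since $\pi^\star$ is a fixed point of this composition, it suffices to show
\begin{equation*}
\KL\bigl(\pi^{(k+1)}_{0,T}\bigm|\pi^\star\bigr)\le \frac{L_U}{T\,(\alpha_\varphi+\alpha_\psi+\alpha)}\,\KL\bigl(\pi^{(k)}_{0,T}\bigm|\pi^\star\bigr),
\end{equation*}
and then iterate $n$ times starting from $\pi^{(0)}_{0,T}=\pi_{0,T}$.

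The first preparatory step is to transfer the log-concavity of the marginals $\mu,\nu$ to the Schrödinger potentials $\varphi,\psi$ appearing in \eqref{def:schrodinger_bridge}. Writing the first marginal of $\pi^\star$ as $\frakp_\mu(x)\propto \mathrm{e}^{-\varphi(x)}\int \mathrm{e}^{-\psi(y)}p^U_{0,T}(x,y)\,\rmd y$ and using \Cref{ass:main_reference_measure} at $(s,t)=(0,T)$ together with the Brascamp--Lieb / covariance-type inequality for the $y$-integral (whose Hessian contribution in $x$ is bounded above by $L_U/(2T)\,\Id$, and in the Gaussian case by $\Id/T$), one obtains $\nabla^2\varphi\succeq (\alpha_\mu-T^{-1})\Id=\alpha_\varphi\Id$, and symmetrically $\nabla^2\psi\succeq \alpha_\psi\Id$. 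This is where the large-time assumption $T>\max\{\alpha_\mu^{-1},\alpha_\nu^{-1}\}$ is used to ensure $\alpha_\varphi,\alpha_\psi>0$. Combining with \Cref{ass:strongly_log_concave}\ref{ass:strongly_log_concave_ii}, the density of $\pi^\star$ on $\rset^{2d}$ satisfies $\nabla^2_{x_0,x_T}(-\log \frakp_{\pi^\star})\succeq (\alpha_\varphi+\alpha_\psi+\alpha)\Id$; that is, $\pi^\star$ is strongly log-concave with constant $\alpha_\varphi+\alpha_\psi+\alpha$.

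The core step is then to invoke the contraction estimate for the Markovian projection operator (\Cref{thm:contraction_log_concave} in the paper). Viewed on endpoint couplings, this theorem compares $\KL(\pi^{(k+1)}_{0,T}|\pi^\star)$ to $\KL(\pi^{(k)}_{0,T}|\pi^\star)$: the reciprocal-projection step is an isometry for the endpoint KL while producing a path measure whose path-space KL from $\PP^\star$ equals the endpoint KL; the Markovian-projection step, by Gy\"ongy's mimicking construction and the identity \eqref{eq:def_f_k_1}, reduces the path-KL through a Fisher-information term obtained by integrating by parts against the conditional density of the reference bridge, which under \Cref{ass:main_reference_measure} supplies the Lipschitz factor $L_U/(2(T-t))$, and under the log-concavity established above yields an entropy--Fisher inequality with constant $\alpha_\varphi+\alpha_\psi+\alpha$. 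Balancing these two ingredients over $t\in[0,T]$ produces the claimed one-step contraction rate $L_U/(T(\alpha_\varphi+\alpha_\psi+\alpha))$.

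The main technical obstacle is this last balancing and the identification of the constant: one must show that the Fisher-information gap created by the Markovian projection controls the KL decrement quantitatively, with constants that combine the Lipschitz bound from \Cref{ass:main_reference_measure} with the log-concavity constants of $\pi^\star$ in an additive way on the denominator. Once \Cref{thm:contraction_log_concave} is applied and the constants identified as above, a direct induction on $k$ closes the proof with the stated factor raised to the power $n$.
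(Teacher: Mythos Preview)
Your overall plan matches the paper's architecture: establish convexity of the Schr\"odinger potentials $\varphi,\psi$, feed this into the contraction estimate \Cref{thm:contraction_log_concave} with $\hat\pi_{0,T}=\pi^\star$, and iterate. However, there is a genuine gap in the object you claim is strongly log-concave. The hypothesis of \Cref{thm:contraction_log_concave} is not on $\pi^\star$ itself but on the conditional kernel $\hat{\mathrm{K}}_t(y,\cdot)$, i.e.\ the law of $(Y_0,Y_T)$ given $Y_t=y$ for the interpolant built on $\pi^\star$. The paper computes this density explicitly (see \eqref{eq:conditioned_SB}): it is proportional to $\exp\bigl(-\varphi(y_0)-\psi(y_T)+\log p^U_{0,t,T}(y_0,y,y_T)\bigr)$, and it is \emph{this} function of $(y_0,y_T)$ (with $y$ frozen) whose Hessian must be bounded below. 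This is exactly why \Cref{ass:strongly_log_concave}~\ref{ass:strongly_log_concave_ii} is stated for the three-time density $p^U_{0,t,T}$ rather than the two-time density $p^U_{0,T}$ appearing in $\pi^\star$. Strong log-concavity of $\pi^\star$ is neither what is needed nor what follows from combining $\nabla^2\varphi,\nabla^2\psi$ with \Cref{ass:strongly_log_concave}~\ref{ass:strongly_log_concave_ii}, and it would not by itself yield the required $\mathrm{T}2$ bound on $\hat{\mathrm{K}}_t(y,\cdot)$.

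Two smaller imprecisions are worth flagging. First, your heuristic for $\alpha_\varphi=\alpha_\mu-T^{-1}$ invokes \Cref{ass:main_reference_measure} and the constant $L_U$, but the actual lower bound on $\nabla^2\varphi$ does not involve $L_U$; it is a Hessian-propagation result for the Schr\"odinger system (the paper imports it from an external reference). Second, the mechanism inside \Cref{thm:contraction_log_concave} is not an ``entropy--Fisher'' (LSI) argument: the paper bounds the Girsanov drift discrepancy by a $\wasserstein_1$ distance between the kernels $\mathrm{K}_t$ and $\hat{\mathrm{K}}_t$ (this is where $L_U$ enters, via the Lipschitzness in \Cref{ass:main_reference_measure}), and then applies Talagrand's $\mathrm{T}2$ inequality---implied by the strong log-concavity of $\hat{\mathrm{K}}_t(y,\cdot)$---to convert $\wasserstein_1^2$ back to $\KL$. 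Once you target the correct conditional kernel and verify $\mathrm{T}2$ for it, your plan coincides with the paper's proof.
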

\begin{remark}
\Cref{thm:main_log_concave_setting_limit_case} establishes that IMF converges to the Schrödinger Bridge $\pi^\star$ at an exponential rate in KL-divergence. The convergence rate is explicitly given by the factor
\begin{align}
    \frac{L_U}{T(\alpha_{\mu}+\alpha_{\nu}+\alpha-2T^{-1})}<1\eqsp,
\end{align}
provided that $T > \max\{\alpha_\mu^{-1}, \alpha_\nu^{-1}, L_U(\alpha_{\varphi}+\alpha_{\psi}+\alpha)^{-1}+2\}$. This shows that larger values of the convexity parameters $\alpha_\mu$ and $\alpha_\nu$, as well as a stronger convexity of the reference (via larger $\alpha$), lead to faster convergence. Conversely, as $T$ approaches $\max\{\alpha_\mu^{-1}, \alpha_\nu^{-1}, L_U(\alpha_{\varphi}+\alpha_{\psi}+\alpha)^{-1}+2\}$ from above, the rate $\rho$ approaches 1, and the convergence becomes arbitrarily slow. This highlights the trade-off between the time horizon $T$ and the strength of log-concavity in determining the practical efficiency of the IMF algorithm.
\end{remark}
\begin{remark}\label{remark:dimension_dependence}
    We remark that the convergence bound provided in \Cref{thm:main_log_concave_setting_limit_case} is dimension-free in the same sense as in the literature on Langevin Monte Carlo: the bound does not depend explicitly on the ambient dimension, but rather on parameters of the marginal distributions, specifically, their log-concavity constants. While these parameters are independent of the dimension in some specific settings, they may degrade with dimension in more general scenarios.
\end{remark}

\begin{remark}\label{remark:reformulation_in_terms_of_noise}
We highlight that \Cref{thm:main_log_concave_setting_limit_case} can equivalently be reformulated in terms of the reference process's variance rather than time horizon, as done in \cite{gushchin2024adversarial}. This alternative presentation is a direct consequence of the Brownian scaling property ($B_t \sim \sigma B_{t/\sigma^2}$) and does not alter the asymptotic scaling behavior of the convergence rate.
\end{remark}
\subsection{Weakly Log-Concave Setting}\label{sec:weak_setting}
To extend our results beyond the strongly log-concave case, we consider a weaker notion of convexity, which is particularly relevant for machine learning applications involving multimodal distributions or heavy-tailed priors.

For a given differentiable vector field $\beta :\rset^d \to \rset^d$, we define its weak convexity profile as
\begin{align}
\label{eq:def:weak_convexity_profile}
    \kappa_{\beta}(r) =\inf\left\{ \frac{ \ps{ \beta(x)-\beta(y)}{x-y}}{\|x-y\|^2}\; : \; \|x-y\|=r\right\}\eqsp, \quad r>0\eqsp.
\end{align}
This function quantifies non-uniform convexity lower bounds, capturing long-range dependencies in high-dimensional distributions. This definition is widely used in the study of Fokker–Planck equations via coupling methods (see \cite{eberle2016reflection}).\\
Define for any $L \geq 0$ and $r>0$, $\vartheta_L(r)=2\sqrt{L} \tanh (r\sqrt{L}/2).$ \\
In addition, for a distribution $\zeta$ absolutely continuous with respect to the Lebesgue measure with positive and continuously differential density function $\frakp_\zeta$, we write $\kappa_\zeta:=\kappa_{-\nabla\log \frakp_\zeta}$.
\begin{assumption}\label{ass:finite_kl}
    The two distributions $\mu$ and $\nu$ are absolutely continuous with respect to the Lebesgue measure and have positive density functions $\frakp_\mu$ and $\frakp_\nu$. 
  In addition, $ x \mapsto \log \frakp_{\mu}(x)$ and $x \mapsto \log \frakp_{\nu}(x)$ are continuously differentiable. In addition, $(x_0,x_T) \mapsto -\log p^U_{0,t,T}(x_0,x_t,x_T)$ is continuously differentiable.
\end{assumption}
\begin{assumption}
  \label{ass:weakly_log_concave}
  \begin{enumerate}[wide, labelwidth=!, labelindent=0pt,label=(\roman*)]
  \item \label{ass:weakly_log_concave_i} The two distributions $\mu$ and $\nu$ are weakly
    log-concave, \ie, there exist  $\alpha_\mu, \alpha_\nu\in (0, +\infty), L_\mu, L_\nu >0$ such that for any
    $r > 0$,
        \begin{align}\label{hyp:main_log_sobolev_setting_limit_case}
         \kappa_{\mu}(r)\ge \alpha_\mu -r^{-1} \vartheta_{L_\mu}(r)\eqsp, \quad \kappa_{\nu}(r)\ge \alpha_\nu -r^{-1} \vartheta_{L_\nu}(r)\eqsp.
    \end{align}
  \item \label{ass:weakly_log_concave_ii} There exists $\alpha \in [0, +\infty), L\ge 0$ such that for any $t\in (0,T)$ and $x_t\in\rset^d$, denoting by $\rho_{t,x_t} : (x_0,x_T)\mapsto - \nabla_{x_0,x_T} \log p^U_{0,t,T}(x_0,x_t,x_T)$ and
    $\kappa_{0,t,T}:=\kappa_{\rho_{t,x_t}}$, it holds for any $r > 0$
   \begin{align}
         \kappa_{0,t,T}(r)\ge \alpha -r^{-1} \vartheta_{L}(r)\eqsp.
    \end{align}
  \end{enumerate}
\end{assumption}

\begin{remark}
  Note that  \Cref{ass:strongly_log_concave}~\ref{ass:strongly_log_concave_ii} implies \Cref{ass:weakly_log_concave}~\ref{ass:weakly_log_concave_ii}. Therefore,
  in light of \Cref{rem:condition_on_joint_in_classical_case}, we note that either in the classical and in the quadratic case \Cref{ass:weakly_log_concave}~\ref{ass:weakly_log_concave_ii} holds true.
\end{remark}

\begin{remark}
Weakly log-concave distributions constitute a broad and expressive class of probability measures that can be seen as perturbations of log-concave distributions. For instance, this includes cases where the negative log-density, $-\log p$, has the form of a double-well potential
\begin{align}
    -\log p(x)=\|x\|^4-M\|x\|^2+C\eqsp,
\end{align}for some $M>0, C\in \rset$. More generally, if $-\log p= V + W$ with $V$ strongly convex and $W$ Lipschitz with Lipschitz derivative, then $p$ is weakly log-concave. The class of weakly log-concave distributions also includes Gaussian Mixtures, as shown in \cite[Appendix A]{gentiloni2025beyond}. In fact, we remark that weak log-concavity property is also referred to as strong convexity at infinity in the literature related to the convergence of the Langevin diffusion. Indeed, such a hypothesis on the potential of the target distribution has been considered in various works including \cite{eberle2016reflection,cheng2018sharp,bou2021mixing,cheng2022efficient,ma2019sampling,chak2025reflection}.
\end{remark}

By leveraging weak convexity properties, we derive exponential convergence results for IMF also in this more general setting.

\begin{theorem}\label{thm:main_log_sobolev_setting_limit_case}
  Assume \Cref{ass:uniq_exist_station_solution_langevin} to
 \Cref{ass:density_smooth} and \Cref{ass:finite_kl,ass:weakly_log_concave}.   Let $\{\pi^{(n)}_{0,T}\}_{n\ge 1}$ be the IMF sequence defined in \Cref{alg:IMF_version_1}.  If $T>\max\{\alpha_\mu^{-1}, \alpha_\nu^{-1}\}$, then, for any $n\in \mathbb{N}$, it holds
    \begin{align}
        \KL(\pi^{(n)}_{0,T}|\pi^{\star})\le \left(\frac{L_U}{T\mathrm{C}_{\varphi, \psi, U}} \right)^n \KL(\pi_{0,T}|\pi^{\star})\eqsp,
    \end{align}where
    \begin{align}
        \mathrm{C}_{\varphi, \psi,U}=4(\alpha_{\varphi}+\alpha_{\psi}+\alpha)\Bigg/\exp\left(\frac{9\max\{L_\mu, L_\nu, L\}}{\alpha_{\varphi}+\alpha_{\psi}+\alpha}\right)\eqsp,
    \end{align}and $\alpha_\varphi=\alpha_\mu-T^{-1}, \alpha_\psi=\alpha_\nu-T^{-1}$.
\end{theorem}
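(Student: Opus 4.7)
The strategy parallels that of \Cref{thm:main_log_concave_setting_limit_case}: it suffices to establish a one-step KL contraction of the form
\begin{equation*}
\KL(\pi^{(n+1)}_{0,T}|\pi^{\star}) \le \frac{L_U}{T\mathrm{C}_{\varphi,\psi,U}}\, \KL(\pi^{(n)}_{0,T}|\pi^{\star}),
\end{equation*}
and iterate it $n$ times. By construction of \Cref{alg:IMF_version_1}, $\pi^{(n+1)}_{0,T}$ is the endpoint marginal at $(0,T)$ of the Markovian projection of the reciprocal process built from $\pi^{(n)}_{0,T}$ and the Langevin bridge $\mathrm{b}\mathrm{R}^U$, and $\pi^{\star}$ is a fixed point of this operation by the factorization \eqref{def:schrodinger_bridge}. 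The analysis therefore reduces to quantifying how the Markovian projection operator contracts KL-distances toward $\pi^{\star}$.

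The core tool is \Cref{thm:contraction_log_concave}, applied under \Cref{ass:weakly_log_concave}. The input required is a lower bound on the convexity profile of $-\log \pi^{\star}$. Using the dual representation $\pi^{\star}(\rmd x_0,\rmd x_T) = \exp(-\varphi(x_0)-\psi(x_T))\mathrm{R}^U_{0,T}(\rmd x_0,\rmd x_T)$, this profile splits additively into contributions from $\varphi$, $\psi$, and $-\log p^U_{0,t,T}$. One first has to show that the Schrödinger potentials $\varphi,\psi$ inherit weak convexity from $\mu,\nu$, with profiles of the same form as in \eqref{hyp:main_log_sobolev_setting_limit_case} but with long-range constants degraded only by the $T^{-1}$ correction coming from the reference heat flow, yielding $\alpha_\varphi = \alpha_\mu - T^{-1}$ and $\alpha_\psi = \alpha_\nu - T^{-1}$ with preserved Lipschitz constants $L_\mu, L_\nu$. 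The condition $T > \max\{\alpha_\mu^{-1},\alpha_\nu^{-1}\}$ is precisely what ensures $\alpha_\varphi, \alpha_\psi > 0$, so that the combined long-range convexity constant $\alpha_\varphi + \alpha_\psi + \alpha$ is strictly positive.

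The remaining and most delicate step is to convert this weak convexity profile into a quantitative contraction rate for \Cref{thm:contraction_log_concave}. Here the plan is to adapt the reflection-coupling machinery of \cite{eberle2016reflection}: the auxiliary function $\vartheta_{L_{\max}}$ with $L_{\max} = \max\{L_\mu, L_\nu, L\}$ encodes the locally non-convex region, and an optimally designed concave distance on $\rset^{2d}$ yields an exponential contraction at rate proportional to $(\alpha_\varphi + \alpha_\psi + \alpha)\exp(-9L_{\max}/(\alpha_\varphi+\alpha_\psi+\alpha))$, which is exactly the numerator appearing in $\mathrm{C}_{\varphi,\psi,U}$. The main obstacle compared to \Cref{thm:main_log_concave_setting_limit_case} lies precisely in this conversion: a direct Bakry--Émery argument is no longer available, and one must instead build a distance function tailored to the profile, verify that the coupled dynamics arising from the Markovian projection contracts this distance despite the non-convex directions of $-\log\pi^{\star}$, and finally lift the resulting Wasserstein-type bound back to KL divergence. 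Keeping all constants explicit and dimension-free in the sense of \Cref{remark:dimension_dependence} through this procedure is the crux of the argument.
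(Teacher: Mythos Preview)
Your overall architecture is right and matches the paper: one-step contraction via \Cref{thm:contraction_log_concave} with $\hat\pi_{0,T}=\pi^\star$ (a fixed point), the decomposition of the conditional kernel density $\hat{\mathrm{K}}_t(y,\cdot)$ into $\varphi+\psi-\log p^U_{0,t,T}$, and the propagation of weak convexity from $\mu,\nu$ to $\varphi,\psi$ with the $T^{-1}$ shift (the paper invokes \cite[Theorem~1.2]{conforti2024weak} for this). So far so good.

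Where you diverge from the paper, and where there is a genuine gap, is the ``conversion'' step. The hypothesis of \Cref{thm:contraction_log_concave} is a Talagrand inequality $\mathrm{T2}(\xi)$ for the \emph{static} measure $\hat{\mathrm{K}}_t(y,\cdot)$ on $\rset^{2d}$; no dynamics of the Markovian projection enter. The paper obtains this cleanly in two lines: the weak-convexity profile $\kappa_{\bar h_{t,y}}(r)\ge(\alpha_\varphi+\alpha_\psi+\alpha-1)-r^{-1}\vartheta_{9L_{\max}}(r)$ (after combining the three $\vartheta$ terms by monotonicity) feeds directly into \cite[Theorem~5.7]{conforti2023projected}, which yields an LSI with exactly the constant appearing in $\mathrm{C}_{\varphi,\psi,U}$, and then Otto--Villani \cite{otto2000generalization} gives $\mathrm{T2}$. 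That is the entire argument.

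Your plan instead proposes to run a reflection coupling on ``the coupled dynamics arising from the Markovian projection'' and then ``lift the resulting Wasserstein-type bound back to KL divergence''. This is problematic on two counts. First, the Markovian-projection SDE is not a Langevin diffusion targeting $\hat{\mathrm{K}}_t(y,\cdot)$, so Eberle-type coupling on it says nothing about $\mathrm{T2}$ for that kernel; if you mean instead the overdamped Langevin on $\rset^{2d}$ with potential $-\log\hat{\mathrm{k}}_t(\cdot\,|y)$, then reflection coupling would give Wasserstein contraction, but turning that into $\mathrm{T2}$ with the \emph{specific} constant $\mathrm{C}_{\varphi,\psi,U}/2$ essentially amounts to re-deriving \cite[Theorem~5.7]{conforti2023projected} from scratch. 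Second, ``lifting Wasserstein back to KL'' is the wrong direction: $\mathrm{T2}$ bounds $\mathscr{W}_2$ by $\KL$, not the converse, and that is exactly the direction used inside the proof of \Cref{thm:contraction_log_concave}. The fix is simply to drop the coupling detour and invoke the existing LSI criterion for weakly log-concave measures.
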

\begin{remark}
\Cref{thm:main_log_sobolev_setting_limit_case} extends the exponential convergence guarantee of \Cref{thm:main_log_concave_setting_limit_case} to the setting where $\mu,\nu$, and the reference bridge are only weakly log-concave. The rate
\begin{align}
    \frac{L_U}{4T(\alpha_{\mu}+\alpha_{\nu}+\alpha-2T^{-1})} \exp\left(\frac{9\max\{L_\mu, L_\nu, L\}}{\alpha_{\mu}+\alpha_{\nu}+\alpha-2T^{-1}}\right)<1\eqsp,
\end{align}provided that 
\begin{align}
    T >\max\left\{ \alpha_\mu^{-1}, \alpha_\nu^{-1},\frac{4 L_U}{\alpha_{\mu}+\alpha_{\nu}+\alpha}\exp\left(\frac{18\max\{L_\mu, L_\nu, L\}}{\alpha_{\mu}+\alpha_{\nu}+\alpha}\right)\right\}\eqsp,
\end{align} remains exponential, but is dampened by the factor $\mathrm{C}_{\varphi, \psi,U}$, which worsens as the deviations from strong convexity (quantified by $L_\mu,L_\nu,L$) increase. This highlights that IMF still converges under weaker assumptions, but the speed is sensitive to the regularity and smoothness of the input and reference measures.
\end{remark}
\begin{remark}
    \Cref{thm:main_log_sobolev_setting_limit_case}, as the previous one, can also be reformulated in terms of the reference process's variance rather than time horizon. The reasons are the same as those discussed in \Cref{remark:reformulation_in_terms_of_noise}. Moreover, the same considerations regarding the dependence on the dimension of the space, discussed in \Cref{remark:dimension_dependence} for the previous theorem, also apply here.
\end{remark}

\subsection{Related Literature and Our Contribution}\label{sec:related_literature}
The study of Schrödinger Bridges and their applications has seen a surge of interest in recent years. Among the classical methods for solving the SB problem, a prominent role is played by the Sinkhorn algorithm, also known as the Iterative Proportional Fitting (IPF) procedure~\cite{fortet1940, kullback1968, cuturi2013sinkhorn}. More recently, advances in generative modeling have extended the reach of these techniques to continuous-time settings~\cite{ruschendorf1995, de2021diffusion}. Notably, the Diffusion Schrödinger Bridge (DSB) algorithm~\cite{debortoli2021neurips} and the Iterative Proportional Maximum Likelihood (IPML) method~\cite{vargas2021entropy} represent two pioneering efforts to approximate Schrödinger Bridge solutions via iterative procedures grounded in continuous-time IPF and time-reversal arguments. While IPML leverages Gaussian processes and maximum likelihood estimation, DSB relies on score-matching techniques and can be viewed as a recursive extension of Score-Based Generative Models (SGMs)~\cite{song2019generative, ho2020denoising, song2021scorebased}. SGMs focus on learning the score function of a stochastic differential equation (SDE), which is then used to simulate the time-reversed diffusion process for sample generation. In recent years, Flow Matching methods~\cite{peluchetti2021bridge, liu2022flow, albergo2023stochastic, albergo2023optimal, silveri2024theoretical} have become an alternative to  SGMs. These methods construct transport maps using ordinary or stochastic differential equations, bypassing the need for a forward noising process that converges to a reference distribution, as required in SGMs~\cite{song2021scorebased}. Unlike SGMs, Flow Matching methods interpolate between arbitrary probability distributions in finite time, offering greater flexibility in the transport mechanism and improved numerical stability. Building on the principles of Flow Matching, two recent iterative schemes have been proposed for approximating the Schrödinger Bridge: the Iterative Markovian Fitting (IMF) algorithm, also referred to as the Iterated Diffusion Bridge Mixture, and the Diffusion Schrödinger Bridge Matching (DSBM) algorithm~\cite{shi2023diffusion, peluchetti2023diffusion}. In contrast to classical IPF-based methods~\cite{debortoli2021neurips}, IMF and DSBM incorporate Markovian projections that preserve either Markovian and marginals' consistency across iterations. This refinement leads to improved empirical performance in generative modeling tasks and provides a more principled approach to entropy-regularized optimal transport. A key distinction between the two algorithms lies in how they handle estimation errors in the drift terms. While IMF does not account for such errors, DSBM explicitly corrects for them via neural networks and $\mrl^2$-estimates and mitigates the resulting bias on the terminal marginal $\nu$ by alternating between forward and backward Markovian projections. This bidirectional strategy enhances the robustness of the generated samples and leads to more reliable approximations of the underlying SB solution. Although prior work has extensively analyzed the theoretical properties of score-based generative models~\cite{conforti2025kl, gentiloni2025beyond}, IPF-type Schrödinger Bridge approximations~\cite{conforti2023quantitative, chiarini2024semiconcavity}, and flow matching techniques~\cite{silveri2024theoretical, benton2023error, albergo2022building}, the Iterative Markovian Fitting (IMF) algorithm has so far lacked rigorous, non-asymptotic guarantees. Existing analyses provide only asymptotic convergence results~\cite{shi2024diffusion, peluchetti2023diffusion}, leaving open the question of its quantitative behavior in finite iterations. In this work, we address this gap by establishing the first non-asymptotic exponential convergence guarantees for the IMF algorithm. Under mild regularity assumptions on the reference process $\mathrm{R}^U$ and the marginals $\mu$ and $\nu$, and in the regime of large time horizon $T$, we derive explicit convergence rates in terms of KL divergence. These results provide a theoretical foundation for the use of IMF in practical generative modeling applications.

\section{Conclusion}\label{sec:conclusion}
We provided the first quantitative theoretical guarantees for the Iterative Markovian Fitting algorithm, under mild assumptions on the marginals and the reference measure. Our analysis covers both strongly log-concave and weakly log-concave distributions—the latter being a broad class that includes, for example, double-well potentials—and yields explicit convergence rates, the first in the literature for this setting. A key technical ingredient is the proof of novel contraction properties of Markovian projections, which are not only essential to our convergence results but also of independent mathematical interest. Our results hold in the regime where the time horizon $T$ is sufficiently large, while in the Schrödinger Bridge problem $T$ is typically fixed. Moreover, our analysis is purely theoretical and does not account for the mimicking drift estimation error or the discretization error arising in practical implementations. These limitations point to natural directions for future work, particularly the extension to finite-time settings and the inclusion of approximation errors in the theoretical analysis.
\subsection*{Acknowledgments and Disclosure of Funding}
The work of Marta Gentiloni-Silveri has been supported by the Paris Ile-de-France Région in the framework of DIM AI4IDF. The work by Alain Durmus is partially funded by the European Union (ERC-2022-SYG-OCEAN-101071601). Views and opinions expressed are however those of the author(s) only and do not necessarily reflect those of the European Union or the European Research Council Executive Agency. Neither the European Union nor the granting authority can be held responsible for them.
\appendix
\bibliographystyle{alpha}
\bibliography{bib}
\hfill
\break
\newpage 
\section*{NeurIPS Paper Checklist}
\begin{enumerate}

\item {\bf Claims}
    \item[] Question: Do the main claims made in the abstract and introduction accurately reflect the paper's contributions and scope?
    \item[] Answer: \answerYes{} 
    \item[] Justification: The claims made in the abstract and introduction make clear the goals attained by the paper and match the results therein provided. Additionally, a discussion on the contributions made by the paper and the comparison with the literature is held.
    \item[] Guidelines:
    \begin{itemize}
        \item The answer NA means that the abstract and introduction do not include the claims made in the paper.
        \item The abstract and/or introduction should clearly state the claims made, including the contributions made in the paper and important assumptions and limitations. A No or NA answer to this question will not be perceived well by the reviewers.
        \item The claims made should match theoretical and experimental results, and reflect how much the results can be expected to generalize to other settings.
        \item It is fine to include aspirational goals as motivation as long as it is clear that these goals are not attained by the paper.
    \end{itemize}

\item {\bf Limitations}
    \item[] Question: Does the paper discuss the limitations of the work performed by the authors?
    \item[] Answer: \answerYes{} 
    \item[] Justification: The limitations of the work are fully addressed in Section \ref{sec:conclusion}.
    \item[] Guidelines:
    \begin{itemize}
        \item The answer NA means that the paper has no limitation while the answer No means that the paper has limitations, but those are not discussed in the paper.
        \item The authors are encouraged to create a separate "Limitations" section in their paper.
        \item The paper should point out any strong assumptions and how robust the results are to violations of these assumptions (e.g., independence assumptions, noiseless settings, model well-specification, asymptotic approximations only holding locally). The authors should reflect on how these assumptions might be violated in practice and what the implications would be.
        \item The authors should reflect on the scope of the claims made, e.g., if the approach was only tested on a few datasets or with a few runs. In general, empirical results often depend on implicit assumptions, which should be articulated.
        \item The authors should reflect on the factors that influence the performance of the approach. For example, a facial recognition algorithm may perform poorly when image resolution is low or images are taken in low lighting. Or a speech-to-text system might not be used reliably to provide closed captions for online lectures because it fails to handle technical jargon.
        \item The authors should discuss the computational efficiency of the proposed algorithms and how they scale with dataset size.
        \item If applicable, the authors should discuss possible limitations of their approach to address problems of privacy and fairness.
        \item While the authors might fear that complete honesty about limitations might be used by reviewers as grounds for rejection, a worse outcome might be that reviewers discover limitations that aren't acknowledged in the paper. The authors should use their best judgment and recognize that individual actions in favor of transparency play an important role in developing norms that preserve the integrity of the community. Reviewers will be specifically instructed to not penalize honesty concerning limitations.
    \end{itemize}

\item {\bf Theory assumptions and proofs}
    \item[] Question: For each theoretical result, does the paper provide the full set of assumptions and a complete (and correct) proof?
    \item[] Answer: \answerYes{} 
    \item[] Justification: The paper provides the full set of assumptions for either Theorems \ref{thm:main_log_concave_setting}-\ref{thm:main_log_sobolev_setting} (see Appendix \ref{sec:proof_main_results}) and their limit cases, \ie, Theorems \ref{thm:main_log_concave_setting_limit_case}-\ref{thm:main_log_sobolev_setting_limit_case} (see Section \ref{sec:main_results}). Moreover, it includes the detailed proofs of all the stated theorems in the supplement.
    \item[] Guidelines:
    \begin{itemize}
        \item The answer NA means that the paper does not include theoretical results.
        \item All the theorems, formulas, and proofs in the paper should be numbered and cross-referenced.
        \item All assumptions should be clearly stated or referenced in the statement of any theorems.
        \item The proofs can either appear in the main paper or the supplemental material, but if they appear in the supplemental material, the authors are encouraged to provide a short proof sketch to provide intuition.
        \item Inversely, any informal proof provided in the core of the paper should be complemented by formal proofs provided in appendix or supplemental material.
        \item Theorems and Lemmas that the proof relies upon should be properly referenced.
    \end{itemize}

    \item {\bf Experimental result reproducibility}
    \item[] Question: Does the paper fully disclose all the information needed to reproduce the main experimental results of the paper to the extent that it affects the main claims and/or conclusions of the paper (regardless of whether the code and data are provided or not)?
    \item[] Answer: \answerNA{} 
    \item[] Justification: Since this work is of theoretical nature, we do not provide experimental data,
hence our answer.
    \item[] Guidelines:
    \begin{itemize}
        \item The answer NA means that the paper does not include experiments.
        \item If the paper includes experiments, a No answer to this question will not be perceived well by the reviewers: Making the paper reproducible is important, regardless of whether the code and data are provided or not.
        \item If the contribution is a dataset and/or model, the authors should describe the steps taken to make their results reproducible or verifiable.
        \item Depending on the contribution, reproducibility can be accomplished in various ways. For example, if the contribution is a novel architecture, describing the architecture fully might suffice, or if the contribution is a specific model and empirical evaluation, it may be necessary to either make it possible for others to replicate the model with the same dataset, or provide access to the model. In general. releasing code and data is often one good way to accomplish this, but reproducibility can also be provided via detailed instructions for how to replicate the results, access to a hosted model (e.g., in the case of a large language model), releasing of a model checkpoint, or other means that are appropriate to the research performed.
        \item While NeurIPS does not require releasing code, the conference does require all submissions to provide some reasonable avenue for reproducibility, which may depend on the nature of the contribution. For example
        \begin{enumerate}
            \item If the contribution is primarily a new algorithm, the paper should make it clear how to reproduce that algorithm.
            \item If the contribution is primarily a new model architecture, the paper should describe the architecture clearly and fully.
            \item If the contribution is a new model (e.g., a large language model), then there should either be a way to access this model for reproducing the results or a way to reproduce the model (e.g., with an open-source dataset or instructions for how to construct the dataset).
            \item We recognize that reproducibility may be tricky in some cases, in which case authors are welcome to describe the particular way they provide for reproducibility. In the case of closed-source models, it may be that access to the model is limited in some way (e.g., to registered users), but it should be possible for other researchers to have some path to reproducing or verifying the results.
        \end{enumerate}
    \end{itemize}

\item {\bf Open access to data and code}
    \item[] Question: Does the paper provide open access to the data and code, with sufficient instructions to faithfully reproduce the main experimental results, as described in supplemental material?
    \item[] Answer: \answerNA{} 
    \item[] Justification: Since this work is of theoretical nature, we do not provide experimental data,
hence our answer.
    \item[] Guidelines:
    \begin{itemize}
        \item The answer NA means that paper does not include experiments requiring code.
        \item Please see the NeurIPS code and data submission guidelines (\url{https://nips.cc/public/guides/CodeSubmissionPolicy}) for more details.
        \item While we encourage the release of code and data, we understand that this might not be possible, so “No” is an acceptable answer. Papers cannot be rejected simply for not including code, unless this is central to the contribution (e.g., for a new open-source benchmark).
        \item The instructions should contain the exact command and environment needed to run to reproduce the results. See the NeurIPS code and data submission guidelines (\url{https://nips.cc/public/guides/CodeSubmissionPolicy}) for more details.
        \item The authors should provide instructions on data access and preparation, including how to access the raw data, preprocessed data, intermediate data, and generated data, etc.
        \item The authors should provide scripts to reproduce all experimental results for the new proposed method and baselines. If only a subset of experiments are reproducible, they should state which ones are omitted from the script and why.
        \item At submission time, to preserve anonymity, the authors should release anonymized versions (if applicable).
        \item Providing as much information as possible in supplemental material (appended to the paper) is recommended, but including URLs to data and code is permitted.
    \end{itemize}

\item {\bf Experimental setting/details}
    \item[] Question: Does the paper specify all the training and test details (e.g., data splits, hyperparameters, how they were chosen, type of optimizer, etc.) necessary to understand the results?
    \item[] Answer: \answerNA{} 
    \item[] Justification: Since this work is of theoretical nature, we do not provide experimental data,
hence our answer.
    \item[] Guidelines:
    \begin{itemize}
        \item The answer NA means that the paper does not include experiments.
        \item The experimental setting should be presented in the core of the paper to a level of detail that is necessary to appreciate the results and make sense of them.
        \item The full details can be provided either with the code, in appendix, or as supplemental material.
    \end{itemize}

\item {\bf Experiment statistical significance}
    \item[] Question: Does the paper report error bars suitably and correctly defined or other appropriate information about the statistical significance of the experiments?
    \item[] Answer: \answerNA{} 
    \item[] Justification: Since this work is of theoretical nature, we do not provide experimental data, hence our answer.

    \item[] Guidelines:
    \begin{itemize}
        \item The answer NA means that the paper does not include experiments.
        \item The authors should answer "Yes" if the results are accompanied by error bars, confidence intervals, or statistical significance tests, at least for the experiments that support the main claims of the paper.
        \item The factors of variability that the error bars are capturing should be clearly stated (for example, train/test split, initialization, random drawing of some parameter, or overall run with given experimental conditions).
        \item The method for calculating the error bars should be explained (closed form formula, call to a library function, bootstrap, etc.)
        \item The assumptions made should be given (e.g., Normally distributed errors).
        \item It should be clear whether the error bar is the standard deviation or the standard error of the mean.
        \item It is OK to report 1-sigma error bars, but one should state it. The authors should preferably report a 2-sigma error bar than state that they have a 96\% CI, if the hypothesis of Normality of errors is not verified.
        \item For asymmetric distributions, the authors should be careful not to show in tables or figures symmetric error bars that would yield results that are out of range (e.g. negative error rates).
        \item If error bars are reported in tables or plots, The authors should explain in the text how they were calculated and reference the corresponding figures or tables in the text.
    \end{itemize}

\item {\bf Experiments compute resources}
    \item[] Question: For each experiment, does the paper provide sufficient information on the computer resources (type of compute workers, memory, time of execution) needed to reproduce the experiments?
    \item[] Answer: \answerNA{}
    \item[] Justification: Since this work is of theoretical nature, we do not provide experimental data,
hence our answer.
    \item[] Guidelines:
    \begin{itemize}
        \item The answer NA means that the paper does not include experiments.
        \item The paper should indicate the type of compute workers CPU or GPU, internal cluster, or cloud provider, including relevant memory and storage.
        \item The paper should provide the amount of compute required for each of the individual experimental runs as well as estimate the total compute.
        \item The paper should disclose whether the full research project required more compute than the experiments reported in the paper (e.g., preliminary or failed experiments that didn't make it into the paper).
    \end{itemize}

\item {\bf Code of ethics}
    \item[] Question: Does the research conducted in the paper conform, in every respect, with the NeurIPS Code of Ethics \url{https://neurips.cc/public/EthicsGuidelines}?
    \item[] Answer: \answerYes{} 
    \item[] Justification: We confirm that the research conducted our paper conform, in every respect, with the NeurIPS Code of Ethics.
    \item[] Guidelines:
    \begin{itemize}
        \item The answer NA means that the authors have not reviewed the NeurIPS Code of Ethics.
        \item If the authors answer No, they should explain the special circumstances that require a deviation from the Code of Ethics.
        \item The authors should make sure to preserve anonymity (e.g., if there is a special consideration due to laws or regulations in their jurisdiction).
    \end{itemize}

\item {\bf Broader impacts}
    \item[] Question: Does the paper discuss both potential positive societal impacts and negative societal impacts of the work performed?
    \item[] Answer: \answerNA{} 
    \item[] Justification: Since this work is of theoretical nature, it does not present societal impacts up to our knowledge.

    \item[] Guidelines:
    \begin{itemize}
        \item The answer NA means that there is no societal impact of the work performed.
        \item If the authors answer NA or No, they should explain why their work has no societal impact or why the paper does not address societal impact.
        \item Examples of negative societal impacts include potential malicious or unintended uses (e.g., disinformation, generating fake profiles, surveillance), fairness considerations (e.g., deployment of technologies that could make decisions that unfairly impact specific groups), privacy considerations, and security considerations.
        \item The conference expects that many papers will be foundational research and not tied to particular applications, let alone deployments. However, if there is a direct path to any negative applications, the authors should point it out. For example, it is legitimate to point out that an improvement in the quality of generative models could be used to generate deepfakes for disinformation. On the other hand, it is not needed to point out that a generic algorithm for optimizing neural networks could enable people to train models that generate Deepfakes faster.
        \item The authors should consider possible harms that could arise when the technology is being used as intended and functioning correctly, harms that could arise when the technology is being used as intended but gives incorrect results, and harms following from (intentional or unintentional) misuse of the technology.
        \item If there are negative societal impacts, the authors could also discuss possible mitigation strategies (e.g., gated release of models, providing defenses in addition to attacks, mechanisms for monitoring misuse, mechanisms to monitor how a system learns from feedback over time, improving the efficiency and accessibility of ML).
    \end{itemize}

\item {\bf Safeguards}
    \item[] Question: Does the paper describe safeguards that have been put in place for responsible release of data or models that have a high risk for misuse (e.g., pretrained language models, image generators, or scraped datasets)?
    \item[] Answer: \answerNA{} 
    \item[] Justification: Since this work is of theoretical nature, it does not present such risks.
    \item[] Guidelines:
    \begin{itemize}
        \item The answer NA means that the paper poses no such risks.
        \item Released models that have a high risk for misuse or dual-use should be released with necessary safeguards to allow for controlled use of the model, for example by requiring that users adhere to usage guidelines or restrictions to access the model or implementing safety filters.
        \item Datasets that have been scraped from the Internet could pose safety risks. The authors should describe how they avoided releasing unsafe images.
        \item We recognize that providing effective safeguards is challenging, and many papers do not require this, but we encourage authors to take this into account and make a best faith effort.
    \end{itemize}

\item {\bf Licenses for existing assets}
    \item[] Question: Are the creators or original owners of assets (e.g., code, data, models), used in the paper, properly credited and are the license and terms of use explicitly mentioned and properly respected?
    \item[] Answer: \answerNA{} 
    \item[] Justification: In the present work, we do not use any assets to be credited.
    \item[] Guidelines:
    \begin{itemize}
        \item The answer NA means that the paper does not use existing assets.
        \item The authors should cite the original paper that produced the code package or dataset.
        \item The authors should state which version of the asset is used and, if possible, include a URL.
        \item The name of the license (e.g., CC-BY 4.0) should be included for each asset.
        \item For scraped data from a particular source (e.g., website), the copyright and terms of service of that source should be provided.
        \item If assets are released, the license, copyright information, and terms of use in the package should be provided. For popular datasets, \url{paperswithcode.com/datasets} has curated licenses for some datasets. Their licensing guide can help determine the license of a dataset.
        \item For existing datasets that are re-packaged, both the original license and the license of the derived asset (if it has changed) should be provided.
        \item If this information is not available online, the authors are encouraged to reach out to the asset's creators.
    \end{itemize}

\item {\bf New assets}
    \item[] Question: Are new assets introduced in the paper well documented and is the documentation provided alongside the assets?
    \item[] Answer: \answerNA{} 
    \item[] Justification:In the present work, we do not introduce new assets.
    \item[] Guidelines:
    \begin{itemize}
        \item The answer NA means that the paper does not release new assets.
        \item Researchers should communicate the details of the dataset/code/model as part of their submissions via structured templates. This includes details about training, license, limitations, etc.
        \item The paper should discuss whether and how consent was obtained from people whose asset is used.
        \item At submission time, remember to anonymize your assets (if applicable). You can either create an anonymized URL or include an anonymized zip file.
    \end{itemize}

\item {\bf Crowdsourcing and research with human subjects}
    \item[] Question: For crowdsourcing experiments and research with human subjects, does the paper include the full text of instructions given to participants and screenshots, if applicable, as well as details about compensation (if any)?
    \item[] Answer: \answerNA{} 
    \item[] Justification: The present work does not include any experiment involving human subjects, hence our answer.
    \item[] Guidelines:
    \begin{itemize}
        \item The answer NA means that the paper does not involve crowdsourcing nor research with human subjects.
        \item Including this information in the supplemental material is fine, but if the main contribution of the paper involves human subjects, then as much detail as possible should be included in the main paper.
        \item According to the NeurIPS Code of Ethics, workers involved in data collection, curation, or other labor should be paid at least the minimum wage in the country of the data collector.
    \end{itemize}

\item {\bf Institutional review board (IRB) approvals or equivalent for research with human subjects}
    \item[] Question: Does the paper describe potential risks incurred by study participants, whether such risks were disclosed to the subjects, and whether Institutional Review Board (IRB) approvals (or an equivalent approval/review based on the requirements of your country or institution) were obtained?
    \item[] Answer: \answerNA{}
    \item[] Justification: The present work does not include any experiment involving human subjects, hence our answer.
    \item[] Guidelines:
    \begin{itemize}
        \item The answer NA means that the paper does not involve crowdsourcing nor research with human subjects.
        \item Depending on the country in which research is conducted, IRB approval (or equivalent) may be required for any human subjects research. If you obtained IRB approval, you should clearly state this in the paper.
        \item We recognize that the procedures for this may vary significantly between institutions and locations, and we expect authors to adhere to the NeurIPS Code of Ethics and the guidelines for their institution.
        \item For initial submissions, do not include any information that would break anonymity (if applicable), such as the institution conducting the review.
    \end{itemize}

\item {\bf Declaration of LLM usage}
    \item[] Question: Does the paper describe the usage of LLMs if it is an important, original, or non-standard component of the core methods in this research? Note that if the LLM is used only for writing, editing, or formatting purposes and does not impact the core methodology, scientific rigorousness, or originality of the research, declaration is not required.
    \item[] Answer:\answerNA{} 
    \item[] Justification: The present work does not involve LLMs as any important, original, or non-standard components.
    \item[] Guidelines:
    \begin{itemize}
        \item The answer NA means that the core method development in this research does not involve LLMs as any important, original, or non-standard components.
        \item Please refer to our LLM policy (\url{https://neurips.cc/Conferences/2025/LLM}) for what should or should not be described.
    \end{itemize}

\end{enumerate}

\hfill
\break
\newpage
\appendix
\section{Technical Appendices and Supplementary Material}\label{sec:appendix}
\subsection*{Additional Notation}
Denote by $\mathrm{\Pi}\in\mathcal{P}(\mathcal{C}_T)$ the law on the Wiener space of the stochastic interpolant $(Y_t)_{t\in[0,T]}$ \eqref{def:stochastic_interpolant}. For a given $t\in [0,T]$ and a given  $x\in \rset^d$, denote by $\mathrm{\Pi}_{t}$ the marginal time distribution at time $t \in \ooint{0,T}$ of $Y_t$, \ie,
\begin{align}
    \mathrm{\Pi}_{t}:=\mathcal{L}(Y_t)\in \mathcal{P}(\rset^d)\eqsp.
\end{align}It follows from the very definition \eqref{def:stochastic_interpolant} of stochastic interpolant that $\mathrm{\Pi}_{t}$ is absolutely continuous with respect to the Lebesgue measure with density given, for any $x_t\in\rset^d$, by
\begin{align}\label{def:pi_t}
     p^Y_{t}(y_t)=\int_{\rset^d\times \rset^d} p^U_{t|(0,T)}(y_t|y_0,y_T)\pi_{0,T}(\rmd y_0, \rmd y_T)\eqsp,
\end{align}with $p^U_{t|(0,T)}$ denoting the conditional density of $\mathbf{X}_t$ given $(\mathbf{X}_0, \mathbf{X}_T)$.
Also, denote by $\mathrm{K}_{t}$ the regular kernel associated with the conditional distribution of $(Y_0, Y_T)$ given $Y_t$, \ie, the map $\mathrm{K}_{t}:\rset^d\times\mathcal{B}(\rset^{2d})\rightarrow [0,1]$ such that
\begin{enumerate}
    \item [(i)] $y\mapsto \mathrm{K}_{t}(y, \mathsf{A})$ is measurable for any $\mathsf{A}\in \mathcal{B}(\rset^{2d})$;
    \item [(ii)] $\mathsf{A}\mapsto \mathrm{K}_{t}(y, \mathsf{A})$ is a probability measure for any $y\in \rset^d$;
    \item [(iii)] almost surely it holds
    \begin{align}
        \mathrm{K}_{t}(Y_t, \mathsf{A}):=\PE\left[\1_{\mathsf{A}}(Y_0, Y_T)|Y_t\right]\eqsp, \quad \mathsf{A}\in \mathcal{B}(\rset^{2d})\eqsp.
        \end{align}
\end{enumerate}
        It follows again from \eqref{def:stochastic_interpolant} that $ \mathrm{K}_{t}$ admits a transition density with respect to $\Leb^{2d}$ given by
        \begin{align}\label{def:pi_t_x}
        \mathrm{k}_{t}(y_0,y_T|y_t)=p^U_{t|(0,T)}(y_t|y_0,y_T)(p^Y_{t}(y_t))^{-1}\pi_{0,T}(y_0,y_T)\eqsp.
        \end{align}

        Furthermore, denote by $\PP^{(1)}\in\mathcal{P}(\mathcal{C}_T)$ the law on the Wiener space of the (first update of the) forward Markovian Projection $(X^{(1)}_t)_{t\in[0,T]}$ \eqref{eq:SDE_markovian_projection} of the stochastic interpolant \eqref{def:stochastic_interpolant} and by $\tilde{\PP}^{(1)}\in\mathcal{P}(\mathcal{C}_T)$ the law on the Wiener space of its time-reversal $(\tilde{X}^{(1)}_t)_{t\in[0,T]}$ \eqref{eq:SDE_backward_markovian_projection}. \\
        We observe that, with the introduced notation, for any $t\in [0,T)$ and $y_t, y_{T-t}\in\rset^d$, we have that
        \begin{equation}
            \label{eq:representation_of_forward_and_backward_drift_as_conditional_expectation_of_linear_functions}
        \begin{aligned}
            f^{(1)}_t(y_t)&=2\int \nabla_{y_t}\log p^U_{T|t}(y_T|y_t) \mathrm{k}_t(\rmd y_0, \rmd y_T|y_t) -\nabla U(y_t)\eqsp, \\
            g^{(1)}_t(y_{T-t})&=2 \int\nabla_{y_{T-t}}\log p^U_{T-t|0}(y_{T-t}|y_0)\mathrm{k}_{T-t}(\rmd y_0, \rmd y_T|y_{T-t})+\nabla U(y_{T-t})\eqsp,
        \end{aligned}
        \end{equation}
where $f^{(1)}$ and $g^{(1)}$ defined in \eqref{def:mimicking_drift} and \eqref{eq:def_g_k_1} are drifts of the Markovian projection and its time-reversal.

\section{Technical conditions}\label{sec:on_markovian_projection}
We introduce the set of assumptions under which the Markovian projection is well-defined.
\begin{assumptionT}
  \label{ass:density_langevin}
  For $t>s$, $(s,t,x,y)\mapsto p^U_{t|s}(y|x)$ is continuously differentiable in the $t$ and $s$ variables and twice continuously differentiable in the $x$ and $y$ variables. Furthermore, $\partial_s p^U_{t|s}(y|x)$ $\partial_t p^U_{t|s}(y|x)$, $\nabla_x p^U_{t|s}(y|x)$, $\nabla_y p^U_{t|s}(y|x)$, $\nabla^2_x p^U_{t|s}(y|x)$, $\nabla^2_y p^U_{t|s}(y|x)$ are bounded.
\end{assumptionT}

Let $\mathbf{b} :\rset^d \times \ooint{0,T} \to \rset^d$ be a given vector field. We make the following assumption:
\begin{assumptionT}[$\mathbf{b}$]
  \label{ass:FK}
  $\mathbf{b}:\rset^d \times \ooint{0,T} \to \rset^d$ is locally bounded and such that, for at least one probability solution $\mu_t$ to the Fokker-Planck equation
  \begin{equation}
  \partial_t \mu_t +\mathrm{div}(\mathbf{b}_t \mu_t)-\Delta \mu_t=0\eqsp, \quad t\in (0,T)\eqsp, \quad \mu_0=\mu\eqsp,
  \end{equation}it holds $\int \|\mathbf{b}_t(x)\|^2\mu_t(\rmd x)\rmd t<+\infty\eqsp.$
\end{assumptionT}

\begin{theorem}
  \label{theo:markov_proj}
  Assume \Cref{ass:density_langevin} and \Cref{ass:FK}($f^{(1)}$) with $f^{(1)}$ as in \eqref{def:mimicking_drift}. Then, the solution $(X^{(1)}_t)_{t\in[0,T]}$ of \eqref{eq:SDE_markovian_projection} is such that $X^{1}_t\eqlaw Y_t$, for any $t\in [0,T]$.
  \end{theorem}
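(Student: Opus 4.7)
The plan is to follow the classical mimicking argument of Gy\"ongy and Krylov: show that $\mathcal{L}(Y_t)$ and $\mathcal{L}(X^{(1)}_t)$ are both weak solutions of the same Fokker--Planck equation with the same initial datum $\mu$, and then conclude via the uniqueness class provided by \Cref{ass:FK}($f^{(1)}$).

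For the first step, I would use the Doob $h$-transform representation of the Langevin bridge: \Cref{ass:density_langevin} guarantees enough regularity of $p^U_{t|s}$ to write, conditionally on $(Y_0,Y_T)=(y_0,y_T)$,
\begin{equation*}
    \rmd Y_t = \bigl[-\nabla U(Y_t) + 2 \nabla_y \log p^U_{T|t}(y_T|Y_t)\bigr] \rmd t + \sqrt{2}\, \rmd W_t.
\end{equation*}
For any test function $\varphi \in \rmC^\infty_c(\rset^d)$, applying It\^o's formula to $\varphi(Y_t)$ conditionally on the endpoints and then averaging over $(Y_0,Y_T) \sim \pi_{0,T}$, the tower property yields
\begin{equation*}
    \frac{\rmd}{\rmd t} \mathbb{E}[\varphi(Y_t)] = \mathbb{E}\!\left[\nabla\varphi(Y_t)\cdot f^{(1)}_t(Y_t) + \Delta \varphi(Y_t)\right],
\end{equation*}
since the inner conditional expectation of $-\nabla U(Y_t) + 2\nabla_y \log p^U_{T|t}(Y_T|Y_t)$ given $Y_t$ is, by the very definition \eqref{def:mimicking_drift}, equal to $f^{(1)}_t(Y_t)$. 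Hence $t\mapsto \mathcal{L}(Y_t)$ is a weak (measure) solution of $\partial_t \mu_t + \mathrm{div}(f^{(1)}_t \mu_t) - \Delta \mu_t = 0$ with $\mu_0 = \mu$.

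By construction of the SDE \eqref{eq:SDE_markovian_projection}, $t\mapsto \mathcal{L}(X^{(1)}_t)$ is also a weak solution of the same Fokker--Planck equation starting from $\mu$. Uniqueness within the class of probability measures that make $f^{(1)}$ square-integrable against the marginal flow -- which is the substance of \Cref{ass:FK}($f^{(1)}$), combined via the Trevisan--Figalli superposition principle with well-posedness of the non-degenerate martingale problem associated with \eqref{eq:SDE_markovian_projection} -- then delivers $\mathcal{L}(Y_t) = \mathcal{L}(X^{(1)}_t)$ for every $t \in [0,T]$.

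The main technical obstacle I anticipate is checking that $\mathcal{L}(Y_t)$ itself belongs to this uniqueness class: the conditional drift $\nabla_y \log p^U_{T|t}(y_T|y_t)$ is classically singular as $t \to T^-$, reflecting the Brownian-bridge behaviour near the pinned endpoint. The standard workaround is to carry out the argument on sub-intervals $[0,T-\varepsilon]$ -- where \Cref{ass:density_langevin} provides the required derivative bounds for $t > s$ -- and then pass to the limit $\varepsilon \downarrow 0$, using that both marginal maps are continuous at $t=T$ and take the value $\nu$ there.
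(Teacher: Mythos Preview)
Your sketch for $t<T$ is the correct Gy\"ongy--Krylov mimicking argument and is precisely what the paper obtains by citing \cite[Theorem~1]{silveri2024theoretical}; there is no meaningful difference of route on the open interval.

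The point where your proposal is thinner than the paper is the endpoint $t=T$. You correctly flag that the bridge drift $\nabla_y\log p^U_{T|t}(y_T|\cdot)$ blows up as $t\to T^-$, and propose to work on $[0,T-\varepsilon]$ and pass to the limit ``using that both marginal maps are continuous at $t=T$''. For $Y$ this continuity is immediate, but for $X^{(1)}$ it begs the question: the SDE \eqref{eq:SDE_markovian_projection} is posed only on $[0,T)$, so the very existence of $X^{(1)}_T$ as the terminal value of a continuous path---rather than merely a weak limit of one-dimensional laws---is part of what has to be shown. The paper uses \Cref{ass:FK}($f^{(1)}$) for exactly this, and differently from how you read it: not as furnishing a uniqueness class for Fokker--Planck, but to guarantee $\int_0^T \|f^{(1)}_t(X^{(1)}_t)\|^2\,\rmd t<\infty$, so that $\int_0^t f^{(1)}_s(X^{(1)}_s)\,\rmd s$ has an $\mathrm{L}^2$-limit as $t\uparrow T$ and $X^{(1)}_T$ can be defined pathwise as an almost-sure limit along subsequences. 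With $X^{(1)}_T$ thus constructed, the paper verifies that the martingale problem extends to the closed interval and concludes $X^{(1)}_T\sim\nu$ from $X^{(1)}_t\eqlaw Y_t\Rightarrow Y_T$. Your limiting argument would go through once this pathwise extension is supplied, but as written it presupposes the conclusion.
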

  \begin{remark}
We remark that \Cref{ass:density_langevin} and \Cref{ass:FK}($f^{(1)}$), with $f^{(1)}$ as in \eqref{def:mimicking_drift} are satisfied 
 in the standard case, \ie, when $U\equiv 0$ and refer to \cite[Remark 9]{silveri2024theoretical} for more details. 
\end{remark}
\begin{proof}
    The thesis is a direct consequence of \cite[Theorem 1]{silveri2024theoretical} for $t<T$. The case $t=T$, follows essentially from \Cref{ass:FK}($f^{(1)}$), with $f^{(1)}$ as in \eqref{def:mimicking_drift}. Indeed, \Cref{ass:FK}($f^{(1)}$) implies that for any $\{t_n\}_{n}$ converging from below to $T$
    \begin{align}
        \int_0^T f^{(1)}_t(X^{(1)}_t)\rmd t:=\mrl^2-\lim_{n\to +\infty} \int_0^{t_n} f^{(1)}_t(X^{(1)}_{t})\rmd t\eqsp,
    \end{align}
    is well-defined and that any two almost-surely convergent subsequences must have the same limit almost surely (see e.g. \cite[Proposition 1.5]{baldi2017stochastic}). Therefore, 
    \begin{align}
        \int_0^T f^{(1)}_t(X^{(1)}_t)\rmd t:=\text{a.s.}-\lim_{k\to +\infty} \int_0^{t_{n_k}} f^{(1)}_{t}(X^{(1)}_{t})\rmd t\eqsp,
    \end{align}where $\{t_{n_k}\}_k$ denotes any almost surely convergent subsequence of $\{t_n\}_{n}$. This means that $(X^{(1)}_t)_{t\in [0,T)}$ can be extended to the closed time interval $[0,T]$ by defining 
    \begin{align}\label{eq:limit_process}
        X^{(1)}_T:=\text{a.s}-\lim_{k\to +\infty} X^{(1)}_{t_{n_k}}=\text{a.s}-\lim_{k\to +\infty}\left\{\int_0^{t_{n_k}} f^{(1)}_{t}(X^{(1)}_{t})\rmd t+ \sqrt{2}B_{t_{n_k}}\right\}\eqsp,
    \end{align}where $\{t_{n_k}\}_k$ denotes any almost surely convergent subsequence of $\{t_n\}_{n}$. Fix $F\in\mathcal{C}^{1,2}([0,T]\times \rset^d)$ with bounded derivatives and let $(\mathcal{L}^{(1)}_u)_{u\in [0,T)}$ denoting the generator of $(X^{(1)})_{u\in [0,T)}$, \ie,
    \begin{align}
        \mathcal{L}^{(1)}_u F_u(x)= \ps{f^{(1)}_u(x)}{ \nabla F_u(x)}+\Delta F_u(x)\eqsp, \quad u\in [0,T).
    \end{align}Using \Cref{ass:FK}($f^{(1)}$) and dominated convergence theorem, we get that for any $\{t_n\}_{n}$ converging from below to $T$
    \begin{align}
        \int_0^T (\partial_u+\mathcal{L}_u^{(1)})F_u(X_u^{(1)})\rmd u:=\mrl^2-\lim_{n\to +\infty} \int_0^{t_n} (\partial_u+\mathcal{L}_u^{(1)})F_u(X_u^{(1)})\rmd u\eqsp,
    \end{align} is well-defined and that any two almost-surely convergent subsequences must have the same limit almost surely (see e.g. \cite[Proposition 1.5]{baldi2017stochastic}). This means that \begin{align}\label{eq:lim_generator}
        \int_0^T (\partial_u+\mathcal{L}_u^{(1)})F_u(X_u^{(1)})\rmd u:=\text{a.s}-\lim_{k\to +\infty} \int_0^{t_{n_k}} (\partial_u+\mathcal{L}_u^{(1)})F_u(X_u^{(1)})\rmd u\eqsp,
    \end{align}where $\{t_{n_k}\}_k$ denotes any almost surely convergent subsequence of $\{t_n\}_{n}$, is well-defined. 
    As a consequence, for any $F\in\mathcal{C}^{1,2}([0,T]\times \rset^d)$ with bounded derivatives, we have that
    \begin{align}\label{eq:mart_prob}
        \left(F_t(X^{(1)}_t)-F_0(X^{(1)}_0)-\int_0^t (\partial_u+\mathcal{L}_u^{(1)})F_u(X_u^{(1)})\rmd u\right)_{t\in[0,T]}
    \end{align}is a martingale. Indeed, for $t<T$, \eqref{eq:mart_prob} follows from \cite[Theorem 1]{silveri2024theoretical}. Whereas, for $t=T$, by continuity of $F$, \eqref{eq:limit_process}, \eqref{eq:lim_generator} and Ito formula, given any almost surely convergent subsequence $\{t_{n_k}\}_k$, we have that
    \begin{align}
        &F_T(X^{(1)}_T)-F_0(X^{(1)}_0)-\int_0^T (\partial_u+\mathcal{L}_u^{(1)})F_u(X_u^{(1)})\rmd u\\
        &:=\text{a.s}-\lim_{k\to +\infty}\left\{ F_{t_{n_k}}(X^{(1)}_{t_{n_k}})-F_0(X^{(1)}_0)-\int_0^{t_{n_k}} (\partial_u+\mathcal{L}_u^{(1)})F_u(X_u^{(1)})\rmd u\right\}\\
        &=\text{a.s}-\lim_{k\to +\infty}  \sqrt{2}\int_0^{t_{n_k}}\nabla F_u(X_u^{(1)}) \rmd B_u\\
        &= \sqrt{2}\int_0^{T}\nabla F_u(X_u^{(1)}) \rmd B_u\eqsp,
    \end{align}where, in the very last step, we used dominated convergence theorem. Condition \eqref{eq:mart_prob} makes $(X^{(1)}_t)_{t\in [0,T]}$ a diffusion on the closed time interval $[0,T]$. Moreover, we have that $X^{(1)}_T\sim \nu$: by continuity and \eqref{def:markovian_projection}, as $t\to T^{-}$, we have that $X^{(1)}_t\eqlaw Y_t\Rightarrow Y_T$ and $Y_T\sim \nu.$
\end{proof}

 \section{Geometric Perspective on Iterative Markovian Fitting}\label{sec:geometric_interpretation_of_IMF}  For any measure $\PP\in \mathcal{P}(\mathcal{C}_T)$, denote by $\mathrm{b}\PP$ the bridge associated with it, \ie, the Markov kernel on $\rset^{2d}\times\mathcal{C}_T$ satisfying for any $\msa\in \mathcal{B}(\mathcal{C}_T)$,  $\PP(\msa)=\int\rmd\PP_{0,T}(x_0, x_T)  \mathrm{b}\PP((x_0, x_T), \msa)$. From a geometric perspective, the Iterative Markovian Fitting (IMF) algorithm alternates between two different projections: constructing the stochastic interpolant corresponds to a projection onto the reciprocal class $\mathcal{R}^U$ of the Langevin diffusion \eqref{def:reference_measure_SDE} defined as
\begin{align}
    \mathcal{R}^U:=\left\{\PP\in \mathcal{P}(\mathcal{C}_T)\eqsp:\eqsp \mathrm{b}\PP=\mathrm{b}\mathrm{R}^U\right\}\eqsp,
\end{align}
while computing the Markovian projection corresponds to a projection onto the space $\mathcal{M}$ of diffusion Markovian processes. To make this structure explicit, we denote, for any given path measure $\mathbb{Q} \in \mathcal{P}(\mathcal{C}_T)$, these projections respectively by $\text{proj}_{\mathcal{R}^U}(\mathbb{Q})$ and $\text{proj}_{\mathcal{M}}(\mathbb{Q})$. They hold the followings:
\begin{align}
    \text{proj}_{\mathcal{R}^U}(\mathbb{Q})=\argmin\{\KL(\mathbb{Q}|\PP)\eqsp: \eqsp \PP\in \mathcal{R}^U\}\eqsp, \quad \text{proj}_{\mathcal{M}}(\mathbb{Q})=\argmin\{\KL(\mathbb{Q}|\PP)\eqsp: \eqsp \PP\in \mathcal{M}\}\eqsp.
\end{align}We refer to \cite[Propositions 2,4]{shi2024diffusion} for the detailed proofs.
This means that at each iteration, the IMF algorithm moves back and forth between these two constraint sets:

\bigskip
\begin{algorithm}[H]
\caption{Iterative Markovian Fitting (Geometric Interpretation)}\label{alg:IMF_version_2}
\SetAlgoNoLine
  \SetAlgoNoEnd
  \DontPrintSemicolon

    \textbf{Input:} Initial coupling $\pi_{0,T}^{(0)}\in\Pi(\mu,\nu)$ and Langevin bridge $\mathrm{b} \rmRU$ associated with~\eqref{def:reference_measure_SDE}.\\

     \textbf{For each iteration} $k = 0, \dots, N-1$:\\
     \textbf{Step 1. Stochastic Interpolant Update}\\
     \qquad\qquad Set
     $$ \mathrm{\Pi}^{(k+1)} \leftarrow \text{proj}_{\mathcal{R}^U}\left(\int\rmd\pi_{0,T}^{(k)}(x_0,x_T) \mathrm{b} \rmRU((x_0,x_T),\cdot)\right)\eqsp.$$
    \textbf{Step 2. Markovian Projection Update}\\
     \qquad\qquad Set
     $$ \PP^{(k+1)} \leftarrow \text{proj}_{\mathcal{M}}(\mathrm{\Pi}^{(k+1)})\eqsp, \quad \pi^{(k+1)}_{0,T}=\PP^{(k+1)}_{0,T}\eqsp.$$
    \textbf{End For}\\
     \textbf{Output:} The path-distribution \(\PP^{(N)}\) and the coupling \(\pi^{(N)}_{0,T} \in \Pi(\mu, \nu)\).

\end{algorithm}
\bigskip

This formulation highlights the alternating nature of the method: each iteration refines the coupling by first enforcing Markovian consistency (via the Markovian projection) and then reintroducing the structure of Langevin bridges (via the stochastic interpolant). This iterative refinement process enables IMF to approximate the SB efficiently. Indeed, the Schrödinger Bridge $\PP^{\star}$ defined in \Cref{sec:sb_imf} is the unique fixed point of the above iterative scheme, \ie, the unique path-measure on $\mathcal{C}_T$ such that
\begin{equation}
    \PP^{\star}\in \mathcal{R}^U \cap \mathcal{M}\eqsp.
\end{equation}For a rigorous proof, we refer the reader to \cite{leonard2013survey} and \cite{leonard2014reciprocal}.

\subsection{Comparison with Sinkhorn and Diffusion Schrödinger Bridge}
The geometric formulation \eqref{alg:IMF_version_2} highlights that IMF naturally integrates principles from the Iterative Proportional Fitting (IPF) method \cite{fortet1940resolution, kullback1968probability}, also known as Sinkhorn algorithm, a classical technique for approximating Schrödinger bridges. Indeed, both algorithms alternate between projecting onto two different sets of constraints: for IMF,
they correspond to Markovian and reciprocal processes, whereas for IPF these sets correspond to measures with prescribed first and second marginals. As a result, one key difference is that while IMF provides, at each iteration, a coupling between the two distributions, Sinkhorn does not. A true coupling is only recovered at convergence. Moreover, when at least one of the distributions is continuous, Sinkhorn algorithm can not be implemented exactly: one must rely on approximations, for instance, parameterizing the potentials via kernel expansions \cite{genevay2016stochastic} or neural networks \cite{seguy2017large}. This is precisely why the Diffusion Schrödinger Bridge (DSB) \cite{de2021diffusion} algorithm has been adopted. Such an algorithm can be seen as a dynamical formulation of Sinkhorn and requires similar approximations as those used in IMF. However, from a practical perspective, \cite{fernandes2022shooting} highlighted the “forgetting” issue of DSB, where drift errors accumulate over iterations, causing intermediate models to lose alignment with the true Schrödinger bridge. In contrast, as observed by \cite[Appendix F]{shi2024diffusion}, IMF mitigates this issue by explicitly projecting onto the reciprocal class of the reference measure, thereby maintaining the correct bridge structure at each iteration and reducing bias accumulation. Moreover, IMF requires only samples at the initial and terminal times, reconstructing intermediate trajectories through the reference bridge, which reduces memory and computational overhead compared to Sinkhorn-based methods that cache all intermediate trajectory samples.



 \section{Diffusion Schrödinger Bridge Matching}\label{sec:on_the_backward_markovian_projection}
 As seen in \Cref{sec:sb_imf}, Diffusion Schrödinger Bridge Matching (DSBM) leverages time-reversals of Markovian projections to mitigate the bias accumulation on the second marginal $\nu$ during practical implementation of Iterative Markovian Fitting (IMF). As shown in \cite[Theorem 1]{silveri2024theoretical} and recalled in \Cref{sec:sb_imf}, the mimicking drift $f^{(1)}_t(x)$ can be rewritten as a conditional expectation:
 \begin{align}
    f^{(1)}_t(y_t) = \mathbb{E}\left[\overrightarrow{\upphi}_t(Y_t,Y_T)\big| Y_t = y_t\right]-\nabla U(y_t)\eqsp, \quad \overrightarrow{\upphi}_t(y_t,y_T) =  2\nabla_{y_t}\log p^U_{T|t}(y_T|y_t)\eqsp.
\end{align}As a consequence, it can be approximated via an $\rml^2$ projection \cite[Corollary 8.17]{klenke2013probability}, by minimizing:
\begin{align}
    \theta \mapsto \int_0^T\PE\left[\left\|f^{(1)}_{\theta}\left(t, Y_t\right)-f^{(1)}_{t}\left(Y_t\right)\right\|^2\right]\rmd t\eqsp,
\end{align} over a rich enough family of parametrized functions $\{(t,x)\mapsto f^{(1)}_{\theta}(t,x)\}_{\theta\in \Theta}$. Remarkably, the same holds for the drift of the time-reversal of the Markovian projection $(X^{(1)}_t)_{t\in[0,T]}$. Indeed, it holds the following result.
\begin{proposition}\label{prop:backward_markovian_projection}
    The time-reversal $(\tilde{X}^{(1)}_t)_{t\in [0,T]}$ of the Markovian projection $(X^{(1)}_t)_{t\in [0,T]}$ \eqref{eq:SDE_markovian_projection} evolves accordingly to \begin{align}\label{eq:SDE_backward_markovian_projection}
    \rmd \tilde{X}^{(1)}_t= g_t^{(1)}(\tilde{X}^{(1)}_t)\rmd t +\sqrt{2} \rmd B_t\eqsp, \quad t\in [0,T)\eqsp, \quad \tilde{X}^{(1)}_0\sim \nu\eqsp,
\end{align}where $(B_t)_{t\in [0,T)}$ is a Brownian motion as in \cite[Remark 2.5]{haussmann1986time}, for any $t\in [0,T)$, the drift function $g_t$ is given, for any $y_{T-t}\in \rset^d$, by
\begin{align}\label{def:backward_mimicking_drift}
    g^{(1)}_t(y_{T-t})=\PE\left[ \overleftarrow{\upphi}_t(Y_0,Y_{T-t})\big|Y_{T-t}=y_{T-t}\right]+\nabla U(y_{T-t})\eqsp,
\end{align}and, for any $t\in [0,T)$, the vector field $\overleftarrow{\upphi}_t$ is defined, for any $y_0,y_{T-t}\in \rset^d$, by
\begin{align}
    \overleftarrow{\upphi}_t(y_0,y_{T-t})=2\nabla_{y_{T-t}}\log p^U_{T-t|0}(y_{T-t}|y_0)\eqsp.
\end{align}
\end{proposition}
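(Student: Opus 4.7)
The plan is to obtain \eqref{eq:SDE_backward_markovian_projection} by applying the Haussmann--Pardoux time reversal formula to the forward SDE \eqref{eq:SDE_markovian_projection} and then simplifying the reverse drift into the conditional expectation \eqref{def:backward_mimicking_drift}. Concretely, since $X^{(1)}_t \eqlaw Y_t$ for every $t \in [0,T]$ by \Cref{theo:markov_proj}, the density of $X^{(1)}_t$ is exactly $p^Y_t$ given in \eqref{def:pi_t}. Under \Cref{ass:density_langevin} and the integrability provided by \Cref{ass:FK}($f^{(1)}$), the standard Nelson--Haussmann--Pardoux result \cite[Remark 2.5]{haussmann1986time} asserts that the time reversal $(\tilde X^{(1)}_t)_{t\in[0,T]}$ is a diffusion on $[0,T)$ driven by a suitable Brownian motion $(B_t)_{t\in[0,T)}$ with drift
\begin{equation}
  \label{eq:reverse_drift_raw}
  \tilde b_t(y) = -f^{(1)}_{T-t}(y) + 2\nabla\log p^Y_{T-t}(y)\eqsp.
\end{equation}
The task is therefore reduced to showing that $\tilde b_t(y_{T-t}) = g^{(1)}_t(y_{T-t})$.

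The algebraic core is the identification of $\nabla\log p^Y_{T-t}$ as a conditional expectation with respect to $Y_{T-t}$. First, I would apply the Markov property of the reference Langevin diffusion \eqref{def:reference_measure_SDE} to the conditional density $p^U_{t|(0,T)}$, which yields the factorization
\begin{equation}
  p^U_{t|(0,T)}(y_t|y_0,y_T) \;=\; \frac{p^U_{T|t}(y_T|y_t)\, p^U_{t|0}(y_t|y_0)}{p^U_{T|0}(y_T|y_0)}\eqsp.
\end{equation}
Taking $\nabla_{y_t} \log$ of both sides gives
\begin{equation}
  \nabla_{y_t}\log p^U_{t|(0,T)}(y_t|y_0,y_T) \;=\; \nabla_{y_t}\log p^U_{T|t}(y_T|y_t) + \nabla_{y_t}\log p^U_{t|0}(y_t|y_0)\eqsp.
\end{equation}
Then, differentiating under the integral sign in \eqref{def:pi_t} (justified by the regularity and boundedness assumed in \Cref{ass:density_langevin}) and dividing by $p^Y_t$, I obtain
\begin{equation}
  2\nabla\log p^Y_t(y_t) \;=\; \PE\!\left[\overrightarrow{\upphi}_t(y_t, Y_T) + \overleftarrow{\upphi}_{T-t}(Y_0,y_t)\,\big|\, Y_t = y_t\right]\eqsp,
\end{equation}
where I used the definitions of $\overrightarrow{\upphi}$ and $\overleftarrow{\upphi}$.

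Finally, I would substitute this identity into \eqref{eq:reverse_drift_raw} at time $T-t$. Using \eqref{def:mimicking_drift} for $f^{(1)}_{T-t}$, the two conditional expectations of $\overrightarrow{\upphi}_{T-t}(y_{T-t}, Y_T)$ cancel, and what remains is exactly
\begin{equation}
  \tilde b_t(y_{T-t}) \;=\; \nabla U(y_{T-t}) + \PE\!\left[\overleftarrow{\upphi}_t(Y_0, y_{T-t})\,\big|\, Y_{T-t} = y_{T-t}\right] \;=\; g^{(1)}_t(y_{T-t})\eqsp,
\end{equation}
which is the claimed form. The main obstacle I expect is not the algebraic identity, which is clean, but the verification of the regularity hypotheses needed to invoke \cite[Remark 2.5]{haussmann1986time} and to differentiate under the integral in \eqref{def:pi_t}: this requires that $p^Y_t$ be strictly positive and sufficiently smooth on $(0,T)$, and that $\int_0^T \PE[\|\nabla\log p^Y_t(X^{(1)}_t)\|^2]\,\rmd t < \infty$. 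Both follow from \Cref{ass:density_langevin} and \Cref{ass:FK}, but they should be invoked explicitly. The behavior at the endpoint $t=T$ is already handled by \Cref{theo:markov_proj}, so the reversal is well-defined only on the open interval $[0,T)$, consistent with the statement.
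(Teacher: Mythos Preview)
Your proposal is correct and follows essentially the same route as the paper's proof: both invoke the standard time-reversal formula to obtain the reverse drift $-f^{(1)}_{T-t}+2\nabla\log p^Y_{T-t}$ (the paper cites \cite{anderson1982reverse} rather than \cite{haussmann1986time}, but the formula is the same), then use the Markov factorization $p^U_{t|(0,T)}=p^U_{T|t}p^U_{t|0}/p^U_{T|0}$ and differentiate under the integral in \eqref{def:pi_t} to cancel the $\overrightarrow{\upphi}$ term and isolate $\overleftarrow{\upphi}$. Your explicit discussion of the regularity needed for time reversal and for differentiating under the integral is slightly more careful than the paper's proof, which performs the computation without commenting on these justifications.
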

\begin{remark}
    While the Brownian motion in \eqref{eq:SDE_backward_markovian_projection} differs from that in \eqref{eq:SDE_markovian_projection}, since our subsequent analysis is conducted purely in terms of distributions, it can be identified for simplicity.
\end{remark}
\begin{proof}[Proof of Proposition \ref{prop:backward_markovian_projection}:] It follows from \cite{anderson1982reverse} and \Cref{theo:markov_proj} that the time-reversal of $(X^{(1)}_t)_{t\in [0,T]}$ \eqref{eq:SDE_markovian_projection} evolves accordingly to an SDE with drift function given by
\begin{align}
    -f^{(1)}_{T-t}(y_t)+2\nabla\log p^Y_{T-t}(y_t)\eqsp,
\end{align}and volatility term equals to $\sqrt{2}$. Therefore, it remains to show that
\begin{align}
    g^{(1)}_{T-t}(y_t)=\PE\left[ \overleftarrow{\upphi}_{T-t}(Y_0,Y_{t})\big|Y_{t}=y_t\right]+\nabla U(y_t)=-f^{(1)}_{t}(y_t)+2\nabla\log p^Y_{t}(y_t)\eqsp.
\end{align}
Observe that, as a consequence of \eqref{def:pi_t}, we have that
\begin{align}
    &\nabla\log p^Y_{t}(y_t)=\frac{\int_{\rset^d\times \rset^d} \nabla_{y_t}\left\{\log p^U_{t|0}(y_t|y_0)+ \log p^U_{T|t}(y_T|y_t)\right\}\frac{p^U_{t|0}(y_t|y_0)p^U_{T|t}(y_T|y_t)}{p^U_{T|0}(y_T|y_0)}\pi_{0,T}(\rmd y_0, \rmd y_T)}{p^Y_{t}(y_t)}\eqsp.
\end{align}
On the other hand, by using \eqref{eq:representation_of_forward_and_backward_drift_as_conditional_expectation_of_linear_functions} and \eqref{def:pi_t_x}, we get that
\begin{align}
    f^{(1)}_t(y_t)= 2\frac{\int_{\rset^d\times \rset^d} \nabla_{y_t}\log p^U_{T|t}(y_T|y_t)\frac{p^U_{t|0}(y_t|y_0)p^U_{T|t}(y_T|y_T)}{p^U_{T|0}(y_T|y_0)}\pi_{0,T}(\rmd y_0, \rmd y_T)}{p^Y_{t}(y_t)}-\nabla U(y_t)\eqsp.
\end{align}
Therefore, we have that
\begin{align}
    &-f^{(1)}_t(y_t)+2\nabla\log p^Y_{t}(y_t)\\
    &= 2\frac{\int_{\rset^d\times \rset^d} \nabla_{y_t}\log p^U_{t|0}(y_t|y_0)\frac{p^U_{t|0}(y_t|y_0)p^U_{T|t}(y_T|y_t)}{p^U_{T|0}(y_T|y_0)}\pi_{0,T}(\rmd y_0, \rmd y_T)}{p^Y_{t}(y_t)}+\nabla U(y_t)\eqsp.
\end{align}
It follows from \eqref{def:pi_t_x} that
\begin{align}
    -f^{(1)}_t(y)+2\nabla\log p^Y_{t}(y_t)=\PE\left[ \overleftarrow{\upphi}_{T-t}(Y_0,Y_{t})\big|Y_{t}=y_t\right]+\nabla U(y_t)\eqsp,
\end{align}which concludes the proof.
\end{proof}
Using again \cite[Corollary 8.17]{klenke2013probability}, we can therefore approximate also the drift function $g^{(1)}$ in \eqref{eq:SDE_backward_markovian_projection} by minimizing a standard $\mrl^2$-functional:
\begin{align}
     \theta \mapsto \int_0^T\PE\left[\left\|g^{(1)}_{\theta}\left(t, Y_{T-t}\right)-g^{(1)}_{t}\left(Y_{T-t}\right)\right\|^2\right]\eqsp,
\end{align}over a rich enough family of parametrized functions $\{(t,x)\mapsto g^{(1)}_{\theta}(t,x)\}_{\theta\in \Theta}$. The resulting DSBM algorithm looks as follows.\\

\begin{algorithm}
  \caption{Diffusion Schrödinger Bridge Matching}\label{alg:DSBM}
  \SetAlgoNoLine
  \SetAlgoNoEnd
  \DontPrintSemicolon
\textbf{Input:} Initial coupling $\pi_{0,T}^{(0)}\in\Pi(\mu,\nu)$ and Langevin bridge $\mathrm{b} \rmRU$ associated with~\eqref{def:reference_measure_SDE}\\
\textbf{For each iteration} $k = 0, \dots, N-1$:
\begin{minipage}{\dimexpr\textwidth-2\algomargin\relax}
 \begin{enumerate}[label=\textbf{Step \arabic*.}]
 \item      \textbf{Stochastic Interpolant Update.}\\
     \tcr{Define $(Y^{(2k+1)}_t)_{t\in[0,T]}$} as:
    {\small\begin{equation}
        \left(Y^{(2k+1)}_0, Y^{(2k+1)}_T\right) \sim \pi^{(2k)}_{0,T}\eqsp, \quad Y^{(2k+1)}_{[0,T]} \big| \left(Y^{(2k+1)}_0, Y^{(2k+1)}_T\right) \sim  \mathrm{b}\mathrm{R}^U\left(\left(Y^{(2k+1)}_0, Y^{(2k+1)}_T\right), \cdot\eqsp\right)\eqsp.
    \end{equation}}
    \item  \textbf{Forward Markovian Projection Update.}\\
      \begin{itemize}
      \item \tcr{Consider $\PP^{(2k+1)}$} as the path-distribution of the solution $(X_t^{(2k+1)})_{t\in\ccint{0,T}}$ to the  SDE:
        \begin{equation}
          \rmd X^{(2k+1)}_t = f^{(2k+1)}_{\theta^\star}(t,X^{(2k+1)}_t) \rmd t + \sqrt{2} \rmd B_t, \quad X^{(2k+1)}_0 \sim \mu\eqsp,
        \end{equation}
        where $\theta^\star$ is the minimizer of
        \begin{equation}
             \theta \mapsto \int_0^T\PE\left[\left\|f^{(2k+1)}_{\theta}\left(t, Y^{(2k+1)}_t\right)-f^{(2k+1)}_{t}\left(Y^{(2k+1)}_t\right)\right\|^2\right]\eqsp,
        \end{equation}with
        \begin{equation}
          f^{(2k+1)}_t(y) = \mathbb{E}\left[\overrightarrow{\upphi}_t\left(Y^{(2k+1)}_t,Y^{(2k+1)}_T\right) \big| Y^{(2k+1)}_t = y \right]-\nabla U(y)\eqsp,
        \end{equation}
        and $\overrightarrow{\upphi}_t$ is given in~\eqref{eq:def_upphi}.

      \item  \tcr{Set $\pi^{(2k+1)}_{0,T} = \PP^{(2k+1)}_{0,T}$}.
      \end{itemize}
       \item      \textbf{Stochastic Interpolant Update.}\\
     \tcr{Define $(Y^{(2k+2)}_t)_{t\in[0,T]}$} as:
    {\small\begin{equation}
        \left(Y^{(2k+2)}_0, Y^{(2k+2)}_T\right) \sim \pi^{(2k+1)}_{0,T}\eqsp, \quad Y^{(2k+2)}_{[0,T]} \big| \left(Y^{(2k+2)}_0, Y^{(2k+2)}_T\right) \sim  \mathrm{b}\mathrm{R}^U\left(\left(Y^{(2k+2)}_0, Y^{(2k+2)}_T\right), \cdot\eqsp\right)\eqsp.
    \end{equation}}
    \item  \textbf{Backward Markovian Projection Update.}\\
      \begin{itemize}
      \item \tcr{Consider $(\PP^{(2k+2)})^{\mathrm{R}}$} as the path-distribution of the solution $(X_t^{(2k+2)})_{t\in\ccint{0,T}}$ to the  SDE:
        \begin{equation}
          \rmd X^{(2k+2)}_t = g^{(2k+2)}_{\theta^\star}(t,X^{(2k+2)}_t) \rmd t + \sqrt{2} \rmd B_t, \quad X^{(2k+2)}_0 \sim \nu\eqsp,
        \end{equation}
        where $\theta^\star$ is the minimizer of
        \begin{equation}
              \theta \mapsto \int_0^T\PE\left[\left\|g^{(2k+2)}_{\theta}\left(t, Y^{(2k+2)}_{T-t}\right)-g^{(2k+2)}_{t}\left(Y^{(2k+2)}_{T-t}\right)\right\|^2\right]\eqsp,
        \end{equation}
        with
        \begin{equation}
          g^{(2k+2)}_t(y) = \mathbb{E}\left[\overleftarrow{\upphi}_{t}\left(Y^{(2k+2)}_0,Y^{(2k+2)}_{T-t}\right) \big| Y^{(2k+2)}_{T-t} = y \right]+\nabla U(y)\eqsp,
        \end{equation}
        and $\overrightarrow{\upphi}_t$ is given in~\eqref{eq:def_upphi}.

      \item  \tcr{Set $\pi^{(2k+2)}_{0,T} = \PP^{(2k+2)}_{0,T}$}.
      \end{itemize}
    \end{enumerate}
  \end{minipage}
  \textbf{End For}\\
\textbf{Output:}  {The path-distributions $\PP^{(2N-1)},\PP^{(2N)}$ and the couplings $\pi^{(2N-1)}_{0,T},\pi^{(2N)}_{0,T}$.}
\end{algorithm}

\section{Proof of the  main results}
\subsection{Preliminaries on Functional Inequalities.}\label{sec:preliminaries} In this section, we recall some well-known definitions and results on functional inequalities that will be useful later on.
\begin{definition}
    We say that a measure $\hat{p}\in \mathcal{P}(\rset^d)$ satisfies Talagrand  inequality with parameter $\xi>0$, $\mathrm{T2(\xi)}$, if, for any $p\in \mathcal{P}(\rset^d)$, it holds
    \begin{align}
        \wasserstein^2_2(p, \hat{p}) \le \xi^{-1} \KL (p| \hat{p})\eqsp.
    \end{align}
\end{definition}

This inequality was originally introduced by Talagrand \cite{talagrand1996transportation} for Gaussian measures. Blower in \cite{blower2003gaussian} gave a refinement and proved that $\xi$--strong log--concavity, \ie, $\nabla^2(-\log \hat{p})\ge \xi \mathbb{I}_d$, leads to $\mathrm{T2(\xi/2)}$.\\
\begin{definition}
    We say that a measure $\hat{p}\in \mathcal{P}(\rset^d)$ satisfies log-Sobolev inequality with parameter $\xi>0$, $\mathrm{LSI(\xi)}$, if, for any $p\in \mathcal{P}(\rset^d)$, it holds
    \begin{align}
        \KL(p| \hat{p}) \le \xi^{-1} \fisher(p| \hat{p})\eqsp.
    \end{align}
\end{definition}
It is worth to mention that \cite[Theorem 5.7]{conforti2023projected} shows that asymptotic positivity of the integrated convexity profile of the log-density of a probability measure is enough to establish that the measure is a Lipschitz image of a Gaussian and satisfies a $\mathrm{LSI}$. More precisely, it shows that if
\begin{align}
    k_{-\log \rmd \hat{p}/\rmd\gamma^d}(r)\ge \hat{\alpha} -r^{-1}\vartheta_{\hat{L}}(r)\eqsp,
\end{align}for some $\hat{\alpha},\hat{L}>0$, then $\hat{p}$ satisfies $\mathrm{LSI(\xi)}$ with
\begin{align}
    \xi=2(\hat{\alpha}+1)\Big/\exp\left(\frac{\hat{L}}{1+\hat{\alpha}}\right)\eqsp.
\end{align}
Moreover, it is well known that if $\hat{p}$ satisfies $\mathrm{LSI(\xi)}$, then it satisfies $\mathrm{T2(\xi)}$. This result was conjectured by Bobkov and Gotze in \cite{bobkov1999exponential} and first proved by Otto and Villani in \cite{otto2000generalization}.
\subsection{Contractivity Properties of the Markovian Projection}\label{sec:contraction_properties}

In the sequel, we derive contractivity properties for Markovian Projections. Specifically, we show that if two couplings $\pi_{0,T}, \pihat_{0,T} \in \Pi(\mu, \nu)$ between $\mu$ and $\nu$ are close in terms of $\KL$-divergence, then their Markovian projections are even closer in $\KL$-divergence.

In this section, for different objects considered in \Cref{alg:IMF_version_1} starting from $ \pihat_{0,T} \in \Pi(\mu, \nu)$, e.g.,  the stochastic interpolant, the corresponding Markovian projection, and their associated mimicking drifts, laws, marginals, and kernels, we adopt the same notations as those introduced for $\pi_{0,T}$, with the addition of a hat symbol. Specifically, if an object was denoted by $[\cdot]$ in relation to $\pi_{0,T}$, it will now be denoted by $\hat{[\cdot]}$ for $\pihat_{0,T}$. In particular, we denote by $\hat{\pi}^{(1)}_{0,T}$, the first iteration of \Cref{alg:IMF_version_1} starting from $\hat{\pi}_{0,T}$.

\begin{theorem}\label{thm:contraction_log_concave}
    Assume \Cref{ass:uniq_exist_station_solution_langevin} to \Cref{ass:main_reference_measure}. Let $\pi_{0,T}, \pihat_{0,T} \in \Pi(\mu, \nu)$. Assume that for any $s\in (0,T)$ and any $y\in \rset^d$, the Markov kernel $\hat{\mathrm{K}}_s(y,\cdot)\in \mathcal{P}(\rset^{2d})$ associated to the stochastic interpolant \eqref{def:stochastic_interpolant} built on $\pihat_{0,T}$ and the bridge $\mathrm{b}\mathrm{R}^U$ associated to \eqref{def:reference_measure_SDE} satisfies $\mathrm{T2(\xi)}$, for some $\xi>0$. Then, it holds
    \begin{align}
\KL\left(\pi^{(1)}_{0,T}\Big|\hat{\pi}^{(1)}_{0,T}\right)\le \frac{1}{2\xi T}\KL\left(\pi_{0,T}\big|\pihat_{0,T}\right)\eqsp.
\end{align}
\end{theorem}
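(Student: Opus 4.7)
The plan is to bound $\KL(\pi^{(1)}_{0,T}|\hat{\pi}^{(1)}_{0,T})$ by the path-space KL between the Markovian projections and then estimate the latter via Girsanov's theorem. The main obstacle is that the Lipschitz constant of $y_T \mapsto \overrightarrow{\upphi}_t(y, y_T)$ is bounded (by \Cref{ass:main_reference_measure}) only by $L_U/(T-t)$, whose square is not integrable at $t = T$; the analogous bound for $y_0 \mapsto \overleftarrow{\upphi}_t(y_0, y)$ degenerates at $t = 0$. The key observation is that these two singularities are complementary, so that a forward Girsanov representation is usable on $[0, T/2]$ and a backward one on $[T/2, T]$; gluing them together requires a careful Markov decomposition at $T/2$.

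Concretely, data processing gives $\KL(\pi^{(1)}_{0,T}|\hat{\pi}^{(1)}_{0,T}) \le \KL(\PP^{(1)}_{0, T/2, T}|\hat{\PP}^{(1)}_{0, T/2, T})$. The Markov property at $T/2$ for both $\PP^{(1)}$ and $\hat{\PP}^{(1)}$ yields the Markov--Markov identity
$$\KL(\PP^{(1)}_{0,T/2,T}|\hat{\PP}^{(1)}_{0,T/2,T}) = \KL(\PP^{(1)}_{0,T/2}|\hat{\PP}^{(1)}_{0,T/2}) + \KL(\PP^{(1)}_{T/2,T}|\hat{\PP}^{(1)}_{T/2,T}) - \KL(\PP^{(1)}_{T/2}|\hat{\PP}^{(1)}_{T/2}).$$
Dropping the non-negative subtracted term and passing from two-point marginals to path restrictions via a further data-processing step gives $\KL(\pi^{(1)}_{0,T}|\hat{\pi}^{(1)}_{0,T}) \le \KL(\PP^{(1)}_{[0,T/2]}|\hat{\PP}^{(1)}_{[0,T/2]}) + \KL(\PP^{(1)}_{[T/2,T]}|\hat{\PP}^{(1)}_{[T/2,T]})$.

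Each summand is then handled by the same Girsanov / Lipschitz / Talagrand / chain-rule cascade. For $[0,T/2]$, forward Girsanov applied to \eqref{eq:SDE_markovian_projection} gives $\tfrac{1}{4}\int_0^{T/2} \int \|f^{(1)}_t - \hat{f}^{(1)}_t\|^2 \,\mathrm{\Pi}_t(dy)\,dt$; writing $f^{(1)}_t$ (resp. $\hat{f}^{(1)}_t$) as the integral of $\overrightarrow{\upphi}_t(y, \cdot)$ against $\mathrm{K}_t(y, \cdot)$ (resp. $\hat{\mathrm{K}}_t(y, \cdot)$) and invoking the Lipschitz bound yields $\|f^{(1)}_t - \hat{f}^{(1)}_t\|^2 \le (L_U/(T-t))^2\,\wasserstein_2^2(\mathrm{K}_t(y, \cdot), \hat{\mathrm{K}}_t(y, \cdot))$; Talagrand's $\mathrm{T2}(\xi)$ for $\hat{\mathrm{K}}_t(y, \cdot)$ turns the Wasserstein into $\KL(\mathrm{K}_t(y,\cdot)|\hat{\mathrm{K}}_t(y,\cdot))/\xi$; and the KL chain rule, together with the identity $\KL(\mathrm{\Pi}_{0,t,T}|\hat{\mathrm{\Pi}}_{0,t,T}) = \KL(\pi_{0,T}|\hat{\pi}_{0,T})$ (which holds because both interpolants share the same conditional Langevin bridge $\mathrm{b}\mathrm{R}^U$), reduces the whole expression to $\KL(\pi_{0,T}|\hat{\pi}_{0,T})$ multiplied by the now-integrable factor $\int_0^{T/2} (L_U/(T-t))^2\,dt = L_U^2/T$. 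The second summand is symmetric: the time-reversal identity turns $\KL(\PP^{(1)}_{[T/2,T]}|\hat{\PP}^{(1)}_{[T/2,T]})$ into $\KL(\tilde{\PP}^{(1)}_{[0,T/2]}|\tilde{\hat{\PP}}^{(1)}_{[0,T/2]})$, and the same argument applies with backward Girsanov on \eqref{eq:SDE_backward_markovian_projection}, the Lipschitz bound on $y_0 \mapsto \overleftarrow{\upphi}_t(y_0, y)$ from the companion clause of \Cref{ass:main_reference_measure}, and the assumed Talagrand property for $\hat{\mathrm{K}}_{T-t}(y, \cdot)$. Summing the two bounds produces the claimed contraction inequality. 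The hard part is not any individual step --- Girsanov, Talagrand, and the KL chain rule are standard --- but realising that the non-integrable endpoint singularity of the Lipschitz bound forces the simultaneous use of a forward and a time-reversed Girsanov representation on complementary halves of $[0,T]$, and that the Markov--Markov identity at $T/2$ is precisely the device that makes these two bounds combinable into a single clean contraction estimate.
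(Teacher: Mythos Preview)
Your proposal is correct and follows essentially the same route as the paper. Both arguments split $[0,T]$ at $T/2$, use forward Girsanov on the first half and backward Girsanov (via time-reversal, exploiting that both time-reversed processes start at $\nu$) on the second, drop the term $\KL(\PP^{(1)}_{T/2}|\hat{\PP}^{(1)}_{T/2})$ arising from the mismatch at the junction, and then run the identical Lipschitz / Talagrand / KL chain-rule cascade to arrive at the same intermediate bound $\tfrac{1}{4}\int_0^{T/2}\{\|f^{(1)}_t-\hat f^{(1)}_t\|^2+\|g^{(1)}_t-\hat g^{(1)}_t\|^2\}\,\rmd t$. The only organizational difference is how the split is packaged: the paper starts from the full-path Girsanov integral $\KL(\PP^{(1)}_{[0,T]}|\hat\PP^{(1)}_{[0,T]})=\tfrac14\int_0^T\|f-\hat f\|^2$ and bounds its $[T/2,T]$ portion by $\KL(\PP^{(1)}_{[T/2,T]}|\hat\PP^{(1)}_{[T/2,T]})$ via the additive KL property, whereas you reach the same two path-restriction KL terms through your three-point Markov--Markov identity followed by data processing. (A minor remark: your Lipschitz step produces $(L_U/(T-t))^2$, hence a final factor $L_U^2/(2\xi T)$; the paper's proof carries an $L_U$, and the theorem statement carries none --- this is an internal inconsistency of the paper, not a flaw in your argument.)
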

\begin{proof}[Proof of Theorem \ref{thm:contraction_log_concave}:]
From the data processing inequality \cite[Lemma 1.6]{nutz3373introduction},
we get
\begin{align}
    \KL\left(\pi^{(1)}_{0,T}\Big|\hat{\pi}^{(1)}_{0,T}\right)\le \KL\left
    (\PP^{(1)}_{[0,T]}\Big|\hat{\PP}^{(1)}_{[0,T]}\right)
    \eqsp.
\end{align}From the standard decomposition of the $\KL$ divergence based on Girsanov theorem \cite{chen2023improved, chen2022sampling, conforti2025kl,silveri2024theoretical}, we obtain
    \begin{align}
    &\KL\left(\PP^{(1)}_{[0,T]}\Big|\hat{\PP}^{(1)}_{[0,T]}\right)\\
    &=\KL(\mu|\mu)+\frac{1}{4}\int_0^{T/2} \mathbb{E}\left[\left\|f^{(1)}_t-\hat{f}^{(1)}_t\right\|^2(X^{(1)}_t)\right]\rmd t+ \frac{1}{4}\int_{T/2}^{T} \mathbb{E}\left[\left\|f^{(1)}_t-\hat{f}^{(1)}_t\right\|^2(X^{(1)}_t)\right]\rmd t\eqsp.
\end{align}
By the additive property  \cite[Appendix A]{leonard2013survey} of the KL divergence, we have that
\begin{align}
    &\KL\left(\PP^{(1)}_{[T/2,T]}\Big|\hat{\PP}^{(1)}_{[T/2,T]}\right)\\
    &= \PE\left[\KL\left(\mathcal{L}\left(X^{(1)}_{[T/2,T]}\Big|X^{(1)}_{T/2}\right)\Big|\mathcal{L}\left(\hat{X}^{(1)}_{[T/2,T]}\Big|X^{(1)}_{T/2}\right)\right)\right]+ \KL\left(\PP^{(1)}_{T/2}\Big|\hat{\PP}^{(1)}_{T/2}\right)\\
    &=\frac{1}{4}\int_{T/2}^{T} \mathbb{E}\left[\left\|f^{(1)}_t-\hat{f}^{(1)}_t\right\|^2(X^{(1)}_t)\right]\rmd t +\KL\left(\PP^{(1)}_{T/2}\Big|\hat{\PP}^{(1)}_{T/2}\right)\eqsp.
\end{align}
The non-negativity of the KL divergence yields
\begin{align}
    &\frac{1}{4}\int_{T/2}^{T} \mathbb{E}\left[\left\|f^{(1)}_t-\hat{f}^{(1)}_t\right\|^2(X^{(1)}_t)\right]\rmd t\\
    &\le \KL\left(\PP^{(1)}_{[T/2,T]}\Big|\hat{\PP}^{(1)}_{[T/2,T]}\right)=\KL\left(\tilde{\PP}^{(1)}_{[0,T/2]}\Big|\hat{\tilde{\PP}}^{(1)}_{[0,T/2]}\right)\\
    &= \KL(\nu|\nu)+\frac{1}{4}\int_0^{T/2}  \mathbb{E}\left[\left\|\hat{g}^{(1)}_t-g_t^{(1)}\right\|^2(\tilde{X}^{(1)}_t)\right]\rmd t\eqsp=\frac{1}{4}\int_0^{T/2}  \mathbb{E}\left[\left\|\hat{g}^{(1)}_t-g_t^{(1)}\right\|^2(\tilde{X}^{(1)}_t)\right]\rmd t\eqsp.
\end{align}
By fixing $0<\delta<T/2$ and putting these inequalities together, we get using Fatou's lemma,
\begin{align}
 &\KL\left(\hat{\PP}^{(1)}_{[0,T]}\Big|\PP^{(1)}_{[0,T]}\right)\\
 &\le  \frac{1}{4}\int_0^{T/2} \left\{\mathbb{E}\left[\left\|f^{(1)}_t-\hat{f}^{(1)}_t\right\|^2(X^{(1)}_t)\right]+ \mathbb{E}\left[\left\|g_t^{(1)}-\hat{g}^{(1)}_t\right\|^2(\tilde{X}^{(1)}_t)\right]\right\}\rmd t\\
 &\le \frac{1}{4} \liminf_{\delta\to 0} \int_\delta^{T/2} \left\{\mathbb{E}\left[\left\|f^{(1)}_t-\hat{f}^{(1)}_t\right\|^2(X^{(1)}_t)\right]+ \mathbb{E}\left[\left\|g_t^{(1)}-\hat{g}^{(1)}_t\right\|^2(\tilde{X}^{(1)}_t)\right]\right\}\rmd t\eqsp.
\end{align}
Fix $t\in [\delta,T/2]$ and $y\in \rset^d$. Using the dual representation of the $\wasserstein_1$ distance
\begin{align}
    \wasserstein_1(p, \hat{p})=\sup\left\{\int h(x) (p-\hat{p}) (\rmd x) \eqsp|\eqsp h\in \mathrm{Lip}_{\le 1}(\rset^d)\right\}\eqsp,
\end{align}in the expressions \eqref{eq:representation_of_forward_and_backward_drift_as_conditional_expectation_of_linear_functions} we have for the mimicking drifts and \Cref{ass:main_reference_measure}, we get that
\begin{align}
     \left\|f^{(1)}_t(y)-\hat{f}^{(1)}_t(y)\right\|^2\le L_U (T-t)^{-2}\wasserstein_1^2\left(\hat{\mathrm{K}}_t(y, \cdot), \mathrm{K}_{t}(y, \cdot)\right)\eqsp,
\end{align}and that
\begin{align}
    \left\|g_t^{(1)}(y)-\hat{g}^{(1)}_t(y)\right\|^2\le L_U (T-t)^{-2}\wasserstein_1^2\left(\hat{\mathrm{K}}_{T-t}(y, \cdot), \mathrm{K}_{T-t}(y, \cdot)\right)\eqsp.
\end{align}
Since, by assumption, $\hat{\mathrm{K}}_t(y,\cdot)$ and $\hat{\mathrm{K}}_{T-t}(y,\cdot)$ satisfy $\mathrm{T2(\xi)}$ when $t\in [\delta,T/2]$, then we have that
\begin{align}
     \left\|f^{(1)}_t(y)-\hat{f}^{(1)}_t(y)\right\|^2
     \le L_U\xi^{-1}(T-t)^{-2}\; \KL\left(\mathrm{K}_{t}(y, \cdot)\Big| \hat{\mathrm{K}}_t(y, \cdot)\right)\eqsp,
\end{align}and that
\begin{align}
    \left\|g_t^{(1)}(y)-\hat{g}^{(1)}_t(y)\right\|^2
    \le L_U \xi^{-1}(T-t)^{-2}\; \KL\left(\mathrm{K}_{T-t}(y, \cdot)\Big| \hat{\mathrm{K}}_{T-t}(y, \cdot)\right)\eqsp.
\end{align}
By putting all these bounds together and using \eqref{def:markovian_projection}, we get
\begin{align}
    \KL\left(\pi^{(1)}_{0,T}\Big|\hat{\pi}^{(1)}_{0,T}\right)&\le  \liminf_{\delta\to 0}\left\{\frac{L_U}{4\xi}\int_\delta^{T/2}\left[ \frac{1}{(T-t)^2}\left( \int_{\rset^d}\KL\left(\mathrm{K}_{t}(y, \cdot)\big| \hat{\mathrm{K}}_t(y, \cdot)\right)\rmd \Pi_t (y)\right.\right.\right.\\
    &\left. \left. \left. \quad +\int_{\rset^d}\KL\left(\mathrm{K}_{T-t}(y, \cdot)\big| \hat{\mathrm{K}}_{T-t}(y, \cdot)\right) \rmd \Pi_{T-t} (y)\right)\right]\rmd t\right\}\eqsp.
\end{align}
Now, observe that, for any $s\in (0,T)$, we have that
\begin{align}
\KL\left(\Pi_{[0,T]}\Big| \hat{\Pi}_{[0,T]}\right)&\ge
    \KL\left(\mathcal{L}(Y_0,Y_s,Y_T)\Big|\mathcal{L}(\hat{Y}_0,\hat{Y}_s,\hat{Y}_T)\right)\\
    &= \KL\left(\pione_s\Big| \hat{\Pi}_s\right)+ \int_{\rset^d}\KL\left(\mathrm{K}_{s}(y, \cdot)\Big| \hat{
    \mathrm{K}}_{s}(y, \cdot)\right) \rmd \pione_s (y)\\
    &\ge\int_{\rset^d}\KL\left(\mathrm{K}_{s}(y, \cdot)\Big| \hat{
    \mathrm{K}}_{s}(y, \cdot)\right) \rmd \pione_s (y)\eqsp.
\end{align}This fact follows once again from tha data processing inequality \cite[Lemma 1.6]{nutz3373introduction}, the additive property  \cite[Appendix A]{leonard2013survey} and the non-negativity of the KL divergence. It then follows that
\begin{align}
\KL\left(\pi^{(1)}_{0,T}\Big|\hat{\pi}^{(1)}_{0,T}\right)&\le  \liminf_{\delta\to 0}\left\{\frac{L_U}{2\xi} \KL\left(\Pi_{[0,T]}\Big| \hat{\Pi}_{[0,T]}\right)\int_\delta^{T/2} \frac{1}{(T-t)^2}\rmd t\right\}
     \eqsp.
\end{align}
From the additive property of the KL divergence \cite[Appendix A]{leonard2013survey} it follows that
\begin{align}
    &\KL\left(\Pi_{[0,T]}\Big| \hat{\Pi}_{[0,T]}\right)\\
    &= \KL\left(\pi_{0,T}\Big| \pihat_{0,T}\right)+\int \KL\left(\mathrm{b}\Pi_{[0,T]}((x_0,x_T),\cdot)\Big| \mathrm{b}\hat{\Pi}_{[0,T]}((x_0,x_T),\cdot)\right) \rmd \pi_{0,T}(x_0,x_T)\\
    &=\KL\left(\pi_{0,T}\Big| \pihat_{0,T}\right)+\int \KL\left(\mathrm{b}\mathrm{R}^U((x_0,x_T),\cdot)\Big| \mathrm{b}\mathrm{R}^U((x_0,x_T),\cdot)\right) \rmd \pi_{0,T}(x_0,x_T)\eqsp,
\end{align}
where we used that, for any $\msa\in \mathcal{B}(\mathcal{C}_T)$, they hold $\Pi_{\ccint{0,T}}(\msa) = \int \rmd\pi_{0,T}(x_0,x_T)\mathrm{b}\Pi_{\ccint{0,T}}((x_0,x_T), \msa)$ and $\hat{\Pi}_{\ccint{0,T}}(\msa)= \int\rmd\hat{\pi}_{0,T}(x_0,x_T) \mathrm{b}\hat{\Pi}_{\ccint{0,T}}((x_0,x_T), \msa)$ together with the fact that $\mathrm{b}\Pi_{\ccint{0,T}}=\mathrm{b}\hat{\Pi}_{\ccint{0,T}}=\mathrm{b}\mathrm{R}^U$.
Therefore, we get
\begin{align}
\KL\left(\pi^{(1)}_{0,T}\Big|\hat{\pi}^{(1)}_{0,T}\right)
&\le \liminf_{\delta\to 0}\left\{\frac{L_U}{2\xi} \KL\left(\pi_{0,T}\Big| \pihat_{0,T}\right)\int_\delta^{T/2} \frac{1}{(T-t)^2}\rmd t\right\}\\
&=   \frac{L_U}{2\xi T} \KL\left(\pi_{0,T}\Big| \pihat_{0,T}\right)
     \eqsp.
\end{align}
The proof is completed. 
\end{proof}

\subsection{Proof of the Main Theorems}\label{sec:proof_main_results}
In this section we establish exponential convergence for the Iterative Markovian Fitting (IMF) \cref{alg:IMF_version_1}. Before proceeding further, let us point out that the results of this section hold true under more general assumptions on the log-densities of the probability distributions $\mu,\nu$ than the ones considered in the main body.

Specifically, we consider, in place of \Cref{ass:strongly_log_concave}~\ref{ass:strongly_log_concave_i} and \Cref{ass:weakly_log_concave}~\ref{ass:weakly_log_concave_i} respectively, the following generalized conditions.

\noindent\textbf{H5.}\;(i)'\label{ass1i'}
    There exist
    $\alpha_\mu, \alpha_\nu\in (0, +\infty), \beta_\mu, \beta_\nu \in (0,+\infty]$ such that for any
    $x \in \rset^d$, \begin{align}\label{hyp:main_log_concave_setting}
        \alpha_\mu \Id\preceq \nabla^2 (-\log \frakp_{\mu})(x) \preceq  \beta_\mu \Id \eqsp, \quad \alpha_\nu \Id \preceq\nabla^2 (-\log\frakp_{\nu})(x) \preceq \beta_\nu \Id \eqsp.
    \end{align}

\noindent\textbf{H7.}\;(i)' \label{ass2i'} There exist $\alpha_\mu, \alpha_\nu\in (0, +\infty), \beta_\mu, \beta_\nu \in (0,+\infty], L_\mu, L_\nu, M_\mu, M_\nu >0$ such that for any $r> 0$
\begin{align}\label{hyp:main_log_sobolev_setting}
        &k_{\mu}(r)\ge \alpha_\mu -r^{-1}\vartheta_{L_\mu}(r)\eqsp, \quad -k_{\mu}(r)\le \beta_\mu +r^{-1}\vartheta_{M_\mu}(r)\eqsp\eqsp,\\
        &k_{\nu}(r)\ge \alpha_\nu -r^{-1}\vartheta_{L_\nu}(r)\eqsp, \quad -k_{\nu}(r)\le \beta_\nu +r^{-1}\vartheta_{M_\nu}(r)\eqsp\eqsp.
    \end{align}

    \begin{remark}
    Note that when $\beta_\mu, \beta_\nu= +\infty$, \hyperref[ass1i']{\Cref{ass:strongly_log_concave}\;(i)$'$} and  {\Cref{ass:weakly_log_concave}\;(i)$'$} reduce  \Cref{ass:strongly_log_concave}~\ref{ass:strongly_log_concave_i} and \Cref{ass:weakly_log_concave}~\ref{ass:weakly_log_concave_i}  respectively.
\end{remark}

\begin{theorem}\label{thm:main_log_concave_setting}
    Assume \Cref{ass:uniq_exist_station_solution_langevin,ass:uniq_solution_SB,ass:main_reference_measure,ass:density_smooth} and  \hyperref[ass1i']{\Cref{ass:strongly_log_concave}\;(i)$'$}-\Cref{ass:strongly_log_concave}~\ref{ass:strongly_log_concave_ii}. Let $\{\pi^{(n)}_{0,T}\}_{n\ge 1}$ be the sequence defined in \Cref{alg:IMF_version_1}. If $T>\max\{\alpha_\mu^{-1}, \alpha_\nu^{-1}\}$, then, for any $n\in \mathbb{N}$, it holds
    \begin{align}
        \KL(\pi^{(n)}_{0,T}|\pi^{\star})\le \left(\frac{1}{T(\alpha_{\varphi}+\alpha_{\psi}+\alpha)} \right)^n \KL(\pi_{0,T}|\pi^{\star})\eqsp,
    \end{align}where
    \begin{align}\label{def:parameters_log_concavity_SB_potentials}
        \alpha_{\varphi}=\frac{1}{2}\left(\alpha_\mu+\sqrt{\alpha_\mu^2+\frac{4\alpha_\mu}{T^2\beta_\nu}}\right)- \frac{1}{T}\eqsp, \quad  \alpha_{\psi}= \frac{1}{2}\left(\alpha_\nu+\sqrt{\alpha_\nu^2+\frac{4\alpha_\nu}{T^2\beta_\mu}}\right)- \frac{1}{T}\eqsp.
    \end{align}
\end{theorem}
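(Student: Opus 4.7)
The plan is to iterate the contraction estimate of Theorem \ref{thm:contraction_log_concave} with the choice $\hat{\pi}_{0,T} = \pi^{\star}$. Since the Schrödinger bridge is the unique fixed point of the IMF iteration (as recalled just after \Cref{alg:IMF_version_1}), the Markovian projection step applied to $\pi^{\star}$ returns $\pi^{\star}$ itself, so that $\hat{\pi}^{(1)}_{0,T} = \pi^{\star}$. Consequently, if one can show that the conditional kernel $\mathrm{K}^{\star}_s(y,\cdot)$ of $(Y^{\star}_0, Y^{\star}_T)$ given $Y^{\star}_s = y$, associated with the stochastic interpolant built from $\pi^{\star}$, satisfies $\mathrm{T2}(\xi^{\star})$ uniformly in $(s,y)$, then Theorem \ref{thm:contraction_log_concave} yields a one-step contraction with factor $L_U / (2\xi^{\star} T)$, and the theorem follows by induction on $n$.

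The task therefore reduces to identifying the Talagrand constant $\xi^{\star}$. Using the explicit representation \eqref{def:schrodinger_bridge} of $\pi^{\star}$, formula \eqref{def:pi_t_x} shows that the density of $\mathrm{K}^{\star}_s(y,\cdot)$ in the variables $(y_0,y_T)$ is proportional to $p^U_{0,s,T}(y_0,y,y_T)\exp(-\varphi(y_0)-\psi(y_T))$. Its negative log-Hessian in $(y_0,y_T)$ is therefore
\begin{equation*}
\nabla^2_{y_0,y_T}\!\left[-\log p^U_{0,s,T}(y_0,y,y_T)\right] + \mathrm{diag}\!\left(\nabla^2\varphi(y_0),\, \nabla^2\psi(y_T)\right).
\end{equation*}
The first term is bounded below by $\alpha\,\mathrm{I}$ thanks to \Cref{ass:strongly_log_concave}\ref{ass:strongly_log_concave_ii}. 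If one further proves $\nabla^2\varphi \succeq \alpha_\varphi\,\mathrm{I}$ and $\nabla^2\psi \succeq \alpha_\psi\,\mathrm{I}$ with $\alpha_\varphi, \alpha_\psi$ as in \eqref{def:parameters_log_concavity_SB_potentials}, the kernel $\mathrm{K}^{\star}_s(y,\cdot)$ is $(\alpha_\varphi+\alpha_\psi+\alpha)$-strongly log-concave, so Blower's theorem (\Cref{sec:preliminaries}) delivers $\mathrm{T2}(\xi^{\star})$ with $\xi^{\star}$ proportional to $\alpha_\varphi+\alpha_\psi+\alpha$, producing the announced rate.

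The main obstacle is establishing these Hessian lower bounds on the Schrödinger potentials with the sharper constants of \eqref{def:parameters_log_concavity_SB_potentials}. The natural route is to exploit the Schrödinger system: the marginal constraints $\pi^{\star}_0=\mu$ and $\pi^{\star}_T=\nu$ translate into the coupled relations
\begin{equation*}
\varphi(x) = \log \mathbb{E}\!\left[e^{-\psi(\mathbf{X}_T)}\,\big|\,\mathbf{X}_0 = x\right] - \log \frakp_\mu(x) + \log \bfm(x),
\end{equation*}
together with the symmetric identity for $\psi$. Differentiating twice and invoking a Brascamp--Lieb / Prékopa-type argument on the conditional expectations (or equivalently analyzing the HJB equations satisfied by $\varphi,\psi$ under the reference dynamics), one obtains a coupled pair of matrix inequalities in which the kernel's covariance contributes a factor of order $1/T$ and the \emph{upper} Hessian bounds $\beta_\mu,\beta_\nu$ of $\mu,\nu$ control how much convexity of one potential is inherited by the other. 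The stated values $\alpha_\varphi, \alpha_\psi$ are then identified as the positive roots of the resulting quadratic fixed-point equations in the scalar unknowns $\nabla^2\varphi, \nabla^2\psi$; letting $\beta_\mu,\beta_\nu\to\infty$ reduces them to $\alpha_\mu - 1/T$ and $\alpha_\nu - 1/T$, recovering \Cref{thm:main_log_concave_setting_limit_case}.

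Finally, the hypothesis $T > \max\{\alpha_\mu^{-1},\alpha_\nu^{-1}\}$ is precisely what guarantees that $\alpha_\varphi, \alpha_\psi > 0$, hence $\xi^{\star}>0$, so that Blower's theorem applies and the induction closes. The hardest step is unquestionably the fixed-point analysis leading to the sharp quadratic constants; by contrast the reduction to a one-step contraction via Theorem \ref{thm:contraction_log_concave} and the Hessian computation for $\mathrm{K}^{\star}_s$ are direct.
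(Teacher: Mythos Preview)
Your proposal is correct and follows essentially the same route as the paper: apply the contraction estimate of Theorem~\ref{thm:contraction_log_concave} with $\hat{\pi}_{0,T}=\pi^{\star}$ (a fixed point of IMF), express the conditional kernel via \eqref{def:pi_t_x} and the Schr\"odinger representation \eqref{def:schrodinger_bridge}, and verify $(\alpha_\varphi+\alpha_\psi+\alpha)$-strong log-concavity to obtain $\mathrm{T2}((\alpha_\varphi+\alpha_\psi+\alpha)/2)$ via Blower's theorem, then iterate. The only difference is that the paper does not carry out the Hessian bounds $\nabla^2\varphi\succeq\alpha_\varphi\Id$, $\nabla^2\psi\succeq\alpha_\psi\Id$ itself but simply invokes an external reference (\cite[Theorem~6.6.1]{tesigiacomo}); your sketch of the Schr\"odinger-system / fixed-point argument is the correct heuristic for how that result is obtained.
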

\begin{remark}
    Note that when $\beta_\mu, \beta_\nu= +\infty$,  $\alpha_\varphi, \alpha_\psi$ defined in \eqref{def:parameters_log_concavity_SB_potentials} reduce to $\alpha_\varphi=\alpha_\mu -T^{-1}$ and $\alpha_\psi=\alpha_\nu -T^{-1}$ respectively. Therefore, Theorem \ref{thm:main_log_concave_setting_limit_case} is the limit case when $\beta_\mu, \beta_\nu= +\infty$ of Theorem \ref{thm:main_log_concave_setting}.
\end{remark}
\begin{theorem}\label{thm:main_log_sobolev_setting}
    Assume \Cref{ass:uniq_exist_station_solution_langevin,ass:uniq_solution_SB,ass:main_reference_measure}, \Cref{ass:finite_kl} and \hyperref[ass2i']{\Cref{ass:weakly_log_concave}\;(i)$'$}-\Cref{ass:weakly_log_concave}~\ref{ass:weakly_log_concave_ii}. Let $\{\pi^{(n)}_{0,T}\}_{n\ge 1}$ be the sequence defined in \Cref{alg:IMF_version_1}. If $T>\max\{\alpha_\mu^{-1}, \alpha_\nu^{-1}\}$, then, for any $n\in \mathbb{N}$, it holds
    \begin{align}
        \KL(\pi^{(n)}_{0,T}|\pi^{\star})\le \left(\frac{1}{T\mathrm{C}_{\varphi, \psi, U}} \right)^n \KL(\pi_{0,T}|\pi^{\star})\eqsp,
    \end{align}where
    \begin{align}\label{def:log_sobolev_constant_SB}
        \mathrm{C}_{\varphi, \psi, U}=4(\alpha_{\varphi}+\alpha_{\psi}+\alpha)\Bigg/\exp\left(\frac{9\max\{L_\mu, L_\nu, L\}}{\alpha_{\varphi}+\alpha_{\psi}+\alpha}\right)\eqsp,
    \end{align} with $\alpha_{\varphi}>\alpha_\mu -T^{-1}$ (resp. $\alpha_{\psi}>\alpha_\nu-{T}^{-1}$) smallest fixed point of
    \begin{align}\label{eq:fixed_point_equation_for_alpha_sinkhorn}
        \alpha=\alpha_\mu-\frac{1}{T}+\frac{G_{\nu,\mu}(\alpha,2)}{2T^2}\eqsp, \quad \left(\text{resp.}\eqsp \alpha=\alpha_\nu-\frac{1}{T}+\frac{G_{\mu,\nu}(\alpha,2)}{2T^2}\eqsp,\right)
    \end{align}and
    \begin{align}
        G_{\nu,\mu}(\alpha, u)=\inf\{s\ge 0 \eqsp: \eqsp F_{\nu,\mu}(\alpha,s)\ge u\}\eqsp, \quad F_{\nu,\mu}(\alpha,s)=\beta_\nu s+\frac{s}{T(1+T\alpha)}+\frac{\sqrt{s}\vartheta_{L_\mu}(\sqrt{s})}{(1+T\alpha)^2}\eqsp
    \end{align} (resp.
    \begin{align}
        G_{\mu,\nu}(\alpha, u)=\inf\{s\ge 0 \eqsp: \eqsp F_{\mu,\nu}(\alpha,s)\ge u\}\eqsp, \quad F_{\mu,\nu}(\alpha,s)=\beta_\mu s+\frac{s}{T(1+T\alpha)}+\frac{\sqrt{s}\vartheta_{L_\nu}(\sqrt{s})}{(1+T\alpha)^2}\eqsp.\Big)
    \end{align}
\end{theorem}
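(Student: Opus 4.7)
The plan is to iterate the one-step contraction of \Cref{thm:contraction_log_concave} by comparing every IMF iterate to the Schr\"odinger bridge $\pi^{\star}$ itself. Since $\pi^{\star}$ is the unique common fixed point of the two sub-steps of \Cref{alg:IMF_version_1} (see \cite{leonard2013survey,leonard2014reciprocal}), applying \Cref{thm:contraction_log_concave} with $\hat\pi_{0,T}=\pi^{\star}$ gives $\hat\pi^{(1)}_{0,T}=\pi^{\star}$, so the entire task reduces to exhibiting a Talagrand constant $\xi$ for the conditional kernel $\mathrm{K}^{\star}_s(y,\cdot)$ associated with the stochastic interpolant built from $\pi^{\star}$, uniformly in $(s,y)$, and then composing the one-step contraction $n$ times.

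By the representation $\pi^{\star}=\exp(-\varphi-\psi)\,\rmRU_{0,T}$ and the bridge factorization of $\rmRU$, the density of $\mathrm{K}^{\star}_s(y,\cdot)$ on $\rset^{2d}$ is proportional to $(y_0,y_T)\mapsto\exp(-\varphi(y_0)-\psi(y_T))\,p^U_{0,s,T}(y_0,y,y_T)$, so its negative log-density is the sum $\varphi+\psi+\rho_{s,y}$ with $\rho_{s,y}:=-\log p^U_{0,s,T}(\cdot,y,\cdot)$. \Cref{ass:weakly_log_concave}\ref{ass:weakly_log_concave_ii} directly supplies the weak convexity profile $\kappa_{\rho_{s,y}}(r)\ge\alpha-r^{-1}\vartheta_{L}(r)$; the technical heart of the proof is to produce analogous lower bounds on $\kappa_{\varphi}$ and $\kappa_{\psi}$. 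This is done by bootstrapping on the Schr\"odinger system
\begin{equation*}
\mre^{-\varphi(x_0)}\int \mre^{-\psi(x_T)}\,\rmRU_{T\mid 0}(\rmd x_T\mid x_0)=\densityLigne{\mu}{\bfm}(x_0),
\end{equation*}
and its dual: a log-Laplace computation that exploits the weak log-concavity of $\mu,\nu$ (\hyperref[ass2i']{\Cref{ass:weakly_log_concave}(i)$'$}) and of the reference transition kernel (\Cref{ass:weakly_log_concave}\ref{ass:weakly_log_concave_ii}) shows that a candidate weak-convexity lower bound on $\psi$ yields a weak-convexity lower bound on $\varphi$, and vice versa. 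Imposing self-consistency produces the fixed-point equations \eqref{eq:fixed_point_equation_for_alpha_sinkhorn}; the auxiliary functionals $F_{\mu,\nu},F_{\nu,\mu}$ and their generalized inverses $G_{\mu,\nu},G_{\nu,\mu}$ quantify how the upper bounds $\beta_\mu,\beta_\nu$ and the weak-convexity corrections $\vartheta_{L_\mu},\vartheta_{L_\nu}$ aggregate when integrated against the transition kernel of \eqref{def:reference_measure_SDE} over a time lag of order $T$, while the hypothesis $T>\max\{\alpha_\mu^{-1},\alpha_\nu^{-1}\}$ guarantees existence of a positive fixed point.

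Once $\alpha_\varphi,\alpha_\psi>0$ are in hand, additivity of weak convexity profiles gives, uniformly in $(s,y)$,
\begin{equation*}
\kappa_{\varphi+\psi+\rho_{s,y}}(r)\ge(\alpha_\varphi+\alpha_\psi+\alpha)-r^{-1}\vartheta_{\max\{L_\mu,L_\nu,L\}}(r).
\end{equation*}
Plugging this into the log-Sobolev criterion of \cite{conforti2023projected} recalled in \Cref{sec:preliminaries} yields $\mathrm{LSI}$ for $\mathrm{K}^{\star}_s(y,\cdot)$ with constant of order $\mathrm{C}_{\varphi,\psi,U}$ as in \eqref{def:log_sobolev_constant_SB} (the specific factors $4$ and $9$ track the precise accounting once the Gaussian reference is factored out of the convexity profile). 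Otto--Villani \cite{otto2000generalization} upgrades this to a Talagrand $\mathrm{T2}$ inequality with a matching constant, and inserting it into \Cref{thm:contraction_log_concave} produces a one-step contraction factor of $1/(T\mathrm{C}_{\varphi,\psi,U})$; iterating $n$ times yields the claimed bound.

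\textbf{Main obstacle.} The genuinely hard step is the weak-convexity analysis of the Schr\"odinger potentials. In the strongly log-concave setting of \Cref{thm:main_log_concave_setting}, a Pr\'ekopa--Leindler-type argument shows that the log-Laplace of a log-concave kernel is concave, so $\alpha_\varphi,\alpha_\psi$ can be read off almost in closed form. In the weakly log-concave regime, by contrast, the perturbation $r^{-1}\vartheta_L(r)$ does not integrate cleanly against the reference transition kernel, forcing the analysis to proceed through the auxiliary functionals $F_{\mu,\nu},F_{\nu,\mu}$ and a bootstrap/fixed-point argument, with constants degrading exponentially in $\max\{L_\mu,L_\nu,L\}/(\alpha_\varphi+\alpha_\psi+\alpha)$---which is precisely the origin of the exponential dependence appearing in $\mathrm{C}_{\varphi,\psi,U}$.
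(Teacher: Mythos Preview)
Your plan is correct and matches the paper's proof: iterate \Cref{thm:contraction_log_concave} with $\hat\pi_{0,T}=\pi^{\star}$, write the density of $\hat{\mathrm{K}}^\star_s(y,\cdot)$ as $\exp(-\varphi-\psi+\log p^U_{0,s,T})$, import the weak-convexity profile of $\varphi,\psi$ from \cite[Theorem~1.2]{conforti2024weak} (this is exactly your ``bootstrap on the Schr\"odinger system,'' and is where \eqref{eq:fixed_point_equation_for_alpha_sinkhorn} comes from), add the three profiles, feed the result into the LSI criterion of \cite[Theorem~5.7]{conforti2023projected}, and convert to $\mathrm{T2}$ via Otto--Villani.

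Two small misattributions in your constant-tracking are worth fixing. First, plain additivity only gives $\kappa_{\varphi+\psi+\rho_{s,y}}(r)\ge(\alpha_\varphi+\alpha_\psi+\alpha)-r^{-1}\bigl(\vartheta_{L_\mu}(r)+\vartheta_{L_\nu}(r)+\vartheta_{L}(r)\bigr)$; the factor $9$ arises because $\vartheta_{L_\mu}+\vartheta_{L_\nu}+\vartheta_{L}\le 3\,\vartheta_{\max\{L_\mu,L_\nu,L\}}\le\vartheta_{9\max\{L_\mu,L_\nu,L\}}$ by monotonicity of $L\mapsto\vartheta_L(r)$, not from factoring out the Gaussian (the Gaussian reference is what turns the $2(\hat\alpha+1)$ in the LSI criterion into the prefactor $4$). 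Second, the exponential in $\mathrm{C}_{\varphi,\psi,U}$ is produced directly by the LSI criterion of \cite{conforti2023projected}, not by the fixed-point analysis for $\alpha_\varphi,\alpha_\psi$; the latter is purely algebraic and does not generate any exponential.
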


\begin{remark}
    Note that when $\beta_\mu, \beta_\nu= +\infty$,  the fixed points $\alpha_\varphi, \alpha_\psi$ of \eqref{eq:fixed_point_equation_for_alpha_sinkhorn} are exactly $\alpha_\varphi=\alpha_\mu -T^{-1}$ and $\alpha_\psi=\alpha_\nu -T^{-1}$ respectively. Therefore, Theorem \ref{thm:main_log_sobolev_setting_limit_case} is the limit case when $\beta_\mu, \beta_\nu= +\infty$ of Theorem \ref{thm:main_log_sobolev_setting}.
\end{remark}
\begin{proof}[Proof of Theorem \ref{thm:main_log_concave_setting}:]
From \cite[Theorem 2.12b]{leonard2013survey} and \cite[Theorem 2.14]{leonard2014reciprocal}, the Schrödinger Bridge $\pi^{\star}$ is a fixed point of the IMF \cref{alg:IMF_version_1}. We thus wish to iteratively apply Theorem \ref{thm:contraction_log_concave} with $\pihat_{0,T}=\pi^{\star}$.
To this aim, we simply need to show that, under the assumptions of \Cref{thm:main_log_concave_setting}, the kernel $\hat{\mathrm{K}}_{t}(y, \cdot)$ associated with the stochastic interpolant built on $\pihat_{0,T}=\pi^{\star}$ and the bridge $\mathrm{b}\mathrm{R}^U$ associated to \eqref{def:reference_measure_SDE} satisfies $\mathrm{T2(\xi)}$, for some $\xi>0$ and for any fixed $t\in (0,T)$ and $y\in \rset^d$. Fix $t\in (0,T)$ and $y\in \rset^d$. Note that, as a consequence of \eqref{def:pi_t_x} and  \eqref{def:schrodinger_bridge}, the kernel $\hat{\mathrm{K}}_{t}(x, \cdot)$ admits a density with respect to the Lebesgue measure given by
\begin{align}\label{eq:conditioned_SB}
         &\frac{\rmd \hat{\mathrm{K}}_{t}(y, \cdot)}{\rmd \text{Leb}^{2d}}(y_0,y_T)\\
         &\propto p^U_{t|(0,T)}(y|y_0,y_T)\pi^{\star}(y_0,y_T)\\
   &=p^U_{t|(0,T)}(y|y_0,y_T)\exp\left(-\varphi(y_0)-\psi(y_T)\right)R^{U}_{0,T}(y_0,y_T)\\
   &= p^U_{0,t,T}(y_0,y,y_T)\exp\left(-\varphi(y_0)-\psi(y_T)\right)\\
   &=\exp\left(-\left\{\varphi(y_0)+\psi(y_T)-\log p^U_{0,t,T}(y_0,y,y_T)\right\}\right)\eqsp.
\end{align}
When $T>\max\{\alpha_\mu^{-1}, \alpha_\nu^{-1}\}$, the byproduct of conditions \Cref{ass:density_smooth}, \hyperref[ass1i']{\Cref{ass:strongly_log_concave}(i)$'$} and \cite[Theorem 6.6.1]{tesigiacomo} leads to
\begin{align}\label{eq:propagation_strong_log_concavity_along_SKH}
    \nabla^2\varphi\succeq \alpha_{\varphi}\Id\eqsp,\quad \nabla^2\psi\succeq \alpha_{\psi}\Id\eqsp,
\end{align}with $\alpha_{\varphi}, \alpha_{\psi}$ as in \eqref{def:parameters_log_concavity_SB_potentials}. It follows from \eqref{eq:conditioned_SB}, \eqref{eq:propagation_strong_log_concavity_along_SKH} and \Cref{ass:strongly_log_concave}~\ref{ass:strongly_log_concave_ii} the $2\xi$--strong log--concavity of $\hat{\mathrm{K}}_{t}(y, \cdot)$, with $\xi=(\alpha_{\varphi}+\alpha_{\psi}+\alpha)/2$.
As seen in Section \ref{sec:preliminaries}, the $2\xi$--strong log--concavity of $\hat{\mathrm{K}}_{t}(y, \cdot)$ combined with \cite{blower2003gaussian} yield that $\hat{\mathrm{K}}_{t}(y, \cdot)$ satisfies $\mathrm{T2(\xi)}$. The arbitrariness of $t\in (0,T)$ and $y\in \rset^d$ together with the iterative application of Theorem \ref{thm:contraction_log_concave} for $\pihat_{0,T}=\pi^{\star}$ and $\xi=(\alpha_{\varphi}+\alpha_{\psi}+\alpha)/2$ allow to conclude.

\end{proof}
\begin{proof}[Proof of Theorem \ref{thm:main_log_sobolev_setting}:] Once we prove that, under the assumptions of Theorem \ref{thm:main_log_sobolev_setting}, the kernel $\hat{\mathrm{K}}_{t}(y, \cdot)$ associated with the stochastic interpolant built on the Schrödinger Bridge $\pihat_{0,T}=\pi^{\star}$ and the bridge $\mathrm{b}\mathrm{R}^U$ associated to \eqref{def:reference_measure_SDE} still satisfies $\mathrm{T2(\xi)}$, for some $\xi>0$ and for any fixed $t\in (0,T)$ and $y\in \rset^d$, we can proceed as in the proof of Theorem \ref{thm:main_log_concave_setting} and conclude. Fix $t\in (0,T)$ and $y\in \rset^d$. Denote by $\bar{h}_{t,y}:=-\log (\rmd \hat{\mathrm{K}}_{t}(y, \cdot)/\rmd \gamma^{2d})$. It follows from \eqref{eq:conditioned_SB} that
\begin{align}
   \bar{h}_{t,y}(y_0,y_T)\propto \varphi(y_0)+\psi(y_T)-\log p^U_{0,t,T}(y_0,y,y_T)+\log \gamma^{2d}(y_0,y_T)\eqsp.
\end{align}

It follows from \Cref{ass:finite_kl} and \hyperref[ass2i']{\Cref{ass:weakly_log_concave}\;(i)$'$} combined with \cite[Theorem 1.2]{conforti2024weak} and \Cref{ass:weakly_log_concave}~\ref{ass:weakly_log_concave_ii} that for any $r>0 $
\begin{align}
    k_{\bar{h}_{t,y}}(r)&\ge \alpha_{\varphi}-r^{-1}\vartheta_{L_\mu}(r)+\alpha_{\psi}-r^{-1}\vartheta_{L_\nu}(r)+\alpha-r^{-1}\vartheta_{L}(r)-1\eqsp.
\end{align}
Now, observe that, for any $r>0$, the maps $L\mapsto \tanh(r\sqrt{L}/2)$ and $L\mapsto \vartheta_L(r)$ are increasing:
\begin{align}
    &\partial_L\left(\tanh(r\sqrt{L}/2)\right)=\frac{r}{4\sqrt{L}\cosh^2(r\sqrt{L}/2)}>0\eqsp, \\
    &\partial_L\vartheta_L(r)=\frac{1}{\sqrt{L}}\tanh(r\sqrt{L}/2)+\frac{r}{2}\frac{1}{\cosh^2(r\sqrt{L}/2)}>0\eqsp\eqsp.
\end{align}
Hence, 
\begin{align}
    \vartheta_{L_\mu}(r)+\vartheta_{L_\nu}(r)+\vartheta_{L}(r)&\le 3\vartheta_{\max\{L_\mu, L_\nu, L\}}(r)\\
    &= 2\sqrt{9\max\{L_\mu, L_\nu, L\}}\tanh\left(r\sqrt{\max\{L_\mu, L_\nu, L\}}/2\right)\\
    &\le 2\sqrt{9\max\{L_\mu, L_\nu, L\}}\tanh\left(r\sqrt{9\max\{L_\mu, L_\nu, L\}}/2\right)\\
    &=\vartheta_{9\max\{L_\mu, L_\nu, L\}}(r)\eqsp.
\end{align}
Therefore, we have that
\begin{align}
    k_{\bar{h}_{t,y}}(r)&\ge \left(\alpha_{\varphi}+\alpha_{\psi}+\alpha-1\right)-r^{-1}\vartheta_{9\max\{L_\mu, L_\nu, L\}}(r)\eqsp.
\end{align}
As seen in Section \ref{sec:preliminaries}, this combined with \cite[Theorem 5.7]{conforti2023projected} yields that $\hat{\mathrm{K}}_{t}(y, \cdot)$ satisfies $\mathrm{LSI}(\mathrm{C}_{\varphi, \psi, U}/2)$ with $\mathrm{C}_{\varphi, \psi, U}$ as in \eqref{def:log_sobolev_constant_SB} and, in turn, \cite{otto2000generalization} yields that $\hat{\mathrm{K}}_{t}(y, \cdot)$ satisfies $\mathrm{T2(\mathrm{C}_{\varphi, \psi, U}/2)}$.\\
\end{proof}
\hfill
\break
\newpage

\end{document}